\documentclass[twoside,11pt]{article}

\usepackage{blindtext}

\usepackage[abbrvbib, preprint]{jmlr2e}

\usepackage{booktabs} 
\usepackage{pifont}
\usepackage[caption=false]{subfig}
\usepackage{multirow}

\usepackage{lastpage}
\jmlrheading{xx}{xxxx}{1-\pageref{LastPage}}{x/xx; Revised x/xx}{x/xx}{xx-xxxx}{Zhenhan Fang, Aixin Tan and Jian Huang}
\usepackage{amsmath}
\usepackage{xcolor}
\usepackage{algorithm}
\usepackage{algpseudocode}

\newcommand{\orange}[1]{{\color{orange}{#1} }}

\firstpageno{1}

\begin{document}

\title{TRACE: Transport Alignment Conformal Prediction via Diffusion and Flow Matching Models} 

\author{\name Zhenhan Fang \email zhenhan-fang@uiowa.edu \\
       \addr Department of Statistics and Actuarial Science\\
       University of Iowa\\
       Iowa, IA 52242, USA
       \AND
       \name Aixin Tan \email aixin-tan@uiowa.edu \\
       \addr Department of Statistics and Actuarial Science\\
       University of Iowa\\
       Iowa, IA 52242, USA
        \AND
       \name Jian Huang \email j.huang@polyu.edu.hk \\
       \addr Departments of Data Science and AI, and Applied Mathematics \\
       The Hong Kong Polytechnic University\\
       Hong Kong, China}

\editor{xxx}

\maketitle

\begin{abstract}
Constructing valid and informative conformal prediction regions for multi-dimensional outputs remains a fundamental challenge. While conformal prediction provides finite-sample, distribution-free coverage guarantees, its practical performance critically depends on the choice of nonconformity score. Existing approaches often rely on restrictive geometric assumptions or require explicit likelihood evaluation and invertible transformations, limiting their applicability in complex generative settings.

In this work, we introduce TRACE (TRansport Alignment Conformal Estimation), a conformal prediction framework that defines nonconformity through transport alignment in diffusion and flow matching models. Rather than evaluating likelihoods, we measure how well a candidate output aligns with the learned generative dynamics by averaging denoising or velocity-matching errors along stochastic transport trajectories. The resulting transport-based scores are scalar-valued and can be calibrated using split conformal prediction, yielding valid marginal coverage under exchangeability. We further analyze the statistical properties of the proposed scores and 
their sensitivity to computational budget. Experiments on synthetic and real datasets demonstrate valid coverage and show that the resulting regions adapt naturally to multimodal and non-convex conditional distributions.
\end{abstract}

\begin{keywords}
  distribution-free inference,
  generative models, multi-dimensional outputs,  nonconformity score,  uncertainty quantification
\end{keywords}

\section{Introduction}

Constructing prediction regions with reliable uncertainty guarantees is one of the central problems in machine learning.
Given an input $x$, a prediction region aims to characterize the set of outputs $y$ that are plausible under the conditional distribution $p(y \mid x)$, typically at a predefined coverage level such as $90\%$.
In many applications, especially those involving high-dimensional or structured outputs, the geometry and interpretability of such regions are as important as their statistical validity.

Conformal prediction provides a principled framework for uncertainty quantification by offering finite-sample, distribution-free marginal coverage guarantees under the sole assumption of exchangeability \citep{papadopoulos2002inductive, vovk2005algorithmic, lei2018distribution, romano2019conformalized, lei2015conformal, pmlr-v108-izbicki20a, chernozhukov2021distributional}.
By calibrating a nonconformity score on a held-out dataset, conformal methods ensure that the resulting prediction regions achieve the desired coverage regardless of model misspecification.
This robustness has made conformal prediction increasingly attractive in settings where probabilistic modeling assumptions are difficult to justify.

Despite these advantages, extending conformal prediction to multi-dimensional outputs remains challenging.
At its core, the conformal framework reduces uncertainty quantification to the problem of ordering scalar nonconformity scores.
When the output dimension is greater than one, mapping a scalar threshold back to a meaningful region in the output space is inherently ambiguous.
As a result, the effectiveness of multi-dimensional conformal prediction depends critically on the choice of nonconformity score and the geometry it induces.

Existing approaches to this problem broadly fall into three categories.
Shape-restricted methods define regions through fixed geometric templates such as hyperrectangles or ellipsoids, but cannot capture complex distributional structure.
Sample-based methods construct regions as unions of neighborhoods around generated samples, gaining flexibility at the cost of fragmentation and sensitivity to sampling.
Latent-density-based methods, such as those based on normalizing flows, leverage invertible transformations to define prediction regions through density evaluation in a latent space. This approach yields geometrically well-behaved regions but requires explicit bijective mappings with tractable Jacobians, limiting its performance in higher-dimensional and complex multimodal distributions.
We review these approaches in detail in Section~\ref{sec:related}.


To move beyond these limitations, we turn to more expressive generative frameworks. Diffusion \citep{DBLP:journals/corr/abs-2006-11239, song2021scorebased} and flow matching \citep{lipman2023flow, liu2023flow} have recently demonstrated remarkable flexibility in modeling complex high-dimensional distributions. Unlike likelihood-based approaches that rely on explicit density evaluation, these models are trained by enforcing local alignment along transport trajectories, through denoising in diffusion models and velocity matching in flow matching. Building on this perspective and the intuition from latent-density methods, we introduce a new approach to nonconformity scoring: rather than evaluating conformity only at the final output, one can exploit the full trajectory of intermediate transport dynamics, capturing distributional information inaccessible to existing methods and defining a novel class of alignment-based conformity measures.
Intuitively, if a candidate $y$ is typical under the conditional distribution $p(y \mid x)$, then perturbations of $y$ along diffusion or flow trajectories should be well explained by the model, resulting in small transport errors.
Conversely, atypical outputs are expected to violate these local transport relations and incur larger errors.

In this work, we formalize this idea by introducing TRACE
(TRansport Alignment Conformal Estimation), a framework
that derives nonconformity scores directly from diffusion
and flow-matching objectives.
The proposed scores aggregate discrepancies between model predictions and ground-truth transport relations across multiple time steps and stochastic perturbations, yielding a scalar measure of how compatible a candidate output is with the learned conditional generative process.
These scores can be seamlessly integrated into the split conformal framework, preserving finite-sample marginal coverage guarantees without requiring explicit likelihood evaluation or invertible transformations.

TRACE conformal regions adapt naturally to the geometric structure of multimodal and non-convex conditional distributions, capturing curved and disconnected components that shape-restricted methods cannot represent.
At the same time, the stochastic nature of the underlying scores introduces an additional source of variability that distinguishes them from latent-density-based approaches.
We analyze this variability and show that it decreases at a controlled rate with the Monte Carlo budget, and that moderate computational effort suffices to achieve prediction region volumes competitive with deterministic density-based methods.

The contributions of this paper are threefold.
First, we introduce TRACE, a nonconformity scoring framework
that derives scores directly from denoising and velocity-matching
objectives, requiring neither explicit likelihood evaluation,
invertible architectures, nor Jacobian computation
(Section~\ref{sec:trace}).
Second, we prove that the Monte Carlo approximation error
in TRACE scores decays at rate $O(1/\sqrt{B})$ in both
the calibrated threshold and region volume, while coverage
remains exact for any finite budget
(Theorems~\ref{thm:score_mse}--\ref{thm:threshold_rate},
Corollary~\ref{cor:volume_stability}). Third, experiments on nine dataset configurations show
that TRACE achieves the smallest or statistically
competitive prediction region volume on all datasets. TRACE-FM consistently attains the
best performance, while TRACE-Diff delivers comparable
results, both outperforming latent-density baselines
by up to two orders of magnitude on high-dimensional
inputs (Section~\ref{sec:experiments}). Code to reproduce our experiments is available at 
\url{https://github.com/ZFangUI/TRACE}.

\section{Related Work}
\label{sec:related}

\subsection{Shape-restricted conformal methods}

Early approaches to multi-dimensional conformal prediction rely on regions with fixed geometric templates.
Axis-aligned hyperrectangles \citep{MESSOUDI2021108101} and Bonferroni-style combinations of marginal intervals \citep{lei2015conformal} are among the simplest constructions, defining rectangular regions whose sides are determined by marginal nonconformity scores.
Ellipsoidal methods \citep{johnstone2021conformal, messoudi2022ellipsoidal} generalize this by using global or local covariance estimates to define Mahalanobis-distance-based regions, allowing some adaptation to correlation structure.

While computationally efficient and easy to calibrate, these methods impose strong geometric assumptions that are rarely aligned with complex conditional distributions.
In the presence of multimodality, skewness, or non-convexity, shape-restricted regions tend to include large volumes of low-density space, leading to overly conservative uncertainty sets.

\subsection{Sample-based conformal methods}

To improve geometric expressiveness, sample-based approaches use conditional generative models to construct conformal regions.
Probabilistic conformal prediction \citep[PCP;][]{wang2022probabilistic} forms regions as unions of balls centered at samples drawn from a conditional generator.
Spherically transformed directional quantile regression \citep[ST-DQR;][]{feldman2023calibrated} learns a low-dimensional latent representation via a conditional variational autoencoder with a unimodal latent distribution, constructs a convex region in latent space using directional quantile regression, samples points within this latent region, decodes them to the output space, and forms the prediction region as a union of balls around the decoded points.

These methods can approximate highly irregular distributions, including multimodal and non-convex shapes.
 However, the resulting regions are often fragmented and sensitive
to sampling variability, and their geometry depends implicitly
on the number of generated samples and the chosen neighborhood metric.

\subsection{Latent-density-based conformal methods}

A different line of work constructs conformal regions through latent representations learned by normalizing flows.
CONTRA \citep{fang2025contra} defines nonconformity scores as latent norms under a trained conditional normalizing flow, yielding prediction regions that are connected and geometrically regular by construction.
Because the score is a deterministic function of the output given a trained model, calibration exhibits zero evaluation variance.
JAPAN \citep{english2026japan} refines this approach by using the negative log-likelihood as the nonconformity score, producing density level sets that adapt to the local geometry of the learned distribution.

These methods offer several desirable properties: deterministic scores, well-controlled region geometry, and strong empirical performance when accurate invertible transformations are available.
However, they require  invertible architectures with tractable Jacobians, and the expressiveness of such architectures may become a bottleneck for high-dimensional or highly complex conditional distributions.
TRACE operates on any model trained via denoising or velocity-matching objectives, without requiring invertibility or density evaluation.
The price is stochastic score variability, which we analyze in Section~\ref{sec:theory} and show to be controllable through the Monte Carlo budget.


\section{Preliminaries}
\label{sec:preliminaries}
Before presenting the proposed scoring framework, we
briefly review split conformal prediction and the two
generative model families on which TRACE is built.
\subsection{Problem Setup and Split Conformal Prediction}

Let $(X,Y) \in \mathcal{X} \times \mathcal{Y}$ denote a random pair with unknown joint distribution.
We assume access to an exchangeable dataset
$$
\mathcal{D} = \{(x_i, y_i)\}_{i=1}^n.
$$
Our goal is to construct, for a new input $x$, a prediction region $\widehat{C}(x) \subseteq \mathcal{Y}$ satisfying the marginal coverage guarantee
\begin{equation}
\mathbb{P}\bigl( Y \in \widehat{C}(X) \bigr) \ge 1 - \alpha.
\end{equation}

In the split conformal framework, the dataset is partitioned into a training set $\mathcal{D}_{\mathrm{train}}$ and a calibration set
$$
\mathcal{D}_{\mathrm{cal}} = \{(x_i, y_i)\}_{i=1}^{n_{\mathrm{cal}}}.
$$
A predictive model is trained on $\mathcal{D}_{\mathrm{train}}$, and a scalar nonconformity score $s(x,y)$ is defined.
The empirical scores $\{s(x_i, y_i)\}_{i=1}^{n_{\mathrm{cal}}}$ are computed on the calibration set.
Let
$$
q_{1-\alpha}
=
\text{the } \left\lceil (1-\alpha)(n_{\mathrm{cal}}+1) \right\rceil\text{-th order statistic}.
$$
The conformal prediction region is defined as
\begin{equation}
\widehat{C}(x)
=
\{ y \in \mathcal{Y} : s(x,y) \le q_{1-\alpha} \}.
\label{eq:conformal_region}
\end{equation}
Under exchangeability, this construction guarantees finite-sample marginal coverage.

\subsection{Conditional Diffusion Models}

Conditional diffusion models, particularly denoising diffusion
probabilistic models
\citep[DDPM;][]{DBLP:journals/corr/abs-2006-11239},
learn to generate samples from a conditional distribution
$p(y \mid x)$ by learning to reverse a noise corruption process.

The key idea can be described in two stages.
In the \emph{forward} stage, a clean target sample $y$ is
gradually corrupted by adding Gaussian noise over $T$ discrete
time steps. At time step $t \in \{1, \dots, T\}$, the noisy
version of $y$ is
\begin{equation}
y_t
=
\sqrt{\bar{\alpha}_t}\, y
+
\sqrt{1-\bar{\alpha}_t}\, \varepsilon,
\quad
\varepsilon \sim \mathcal{N}(0,I),
\label{eq:forward_diffusion}
\end{equation}
where $\bar{\alpha}_t = \prod_{s=1}^t \alpha_s$ is the
cumulative product of a predefined noise schedule
$\{\alpha_s\}_{s=1}^T$ with $\alpha_s \in (0,1)$.
The schedule is monotonically decreasing, so that
$\bar{\alpha}_t$ is close to $1$ for small $t$
(the noisy sample $y_t$ remains near the original $y$)
and close to $0$ for large $t$
(the noise dominates and $y_t$ becomes nearly
indistinguishable from pure Gaussian noise).
In the \emph{reverse} stage, a neural network
$\hat{\varepsilon}_\theta(y_t, t, x)$ is trained to undo this
corruption: given the noisy sample $y_t$, the time index $t$,
and the conditioning input $x$, it predicts the noise
$\varepsilon$ that was added.
If the network can accurately identify what noise was injected,
then the original $y$ can be recovered by subtracting it.
By learning to denoise at every noise level from light grain
to heavy static, the model implicitly learns the full
structure of the conditional distribution $p(y \mid x)$.

To generate new samples, one starts from pure noise
$y_T \sim \mathcal{N}(0, I)$ and iteratively applies the
learned denoiser to 
reconstruct a new sample.
For notational simplicity, we treat $t$ as a discrete index;
in practice, time is often normalized to $[0, 1]$ when
provided as input to the neural network.
\begin{figure}[!t]
\centering
\begin{minipage}[a]{\textwidth}
    \centering
    \includegraphics[width=\textwidth]{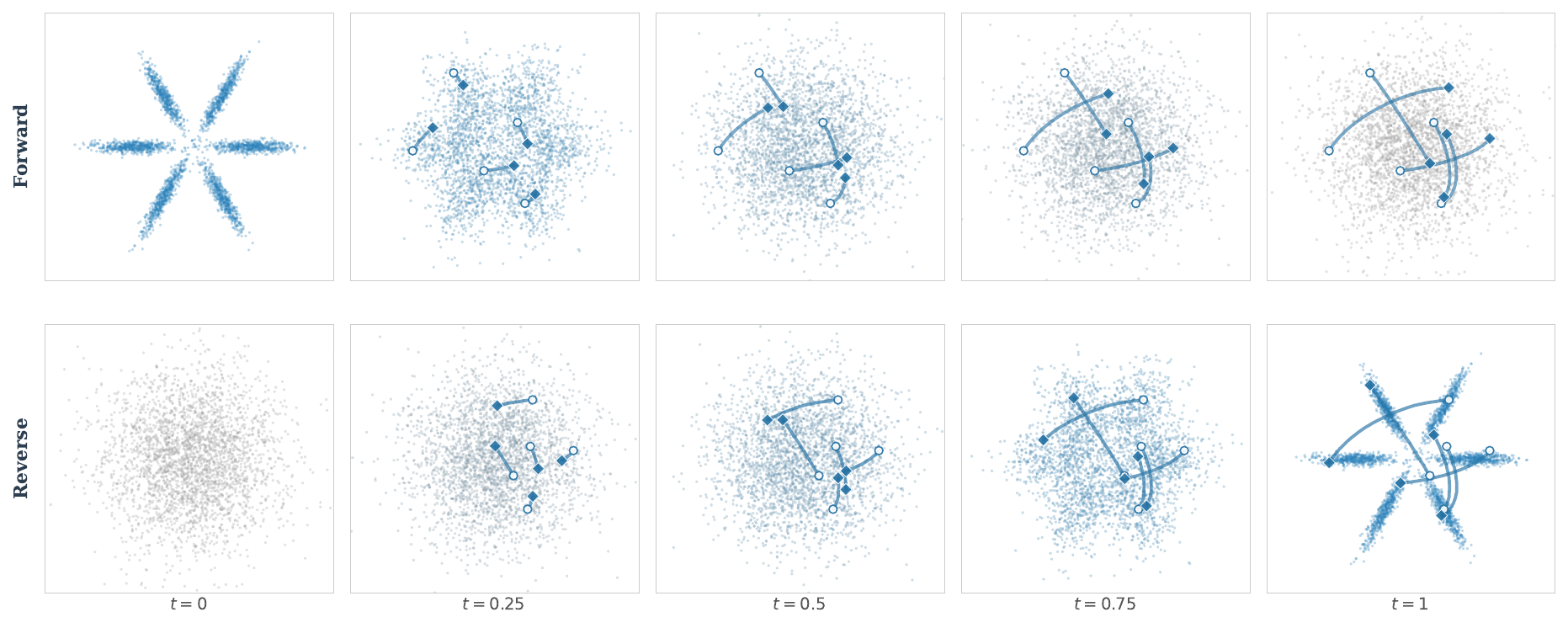}
    \centerline{\small (a) \textsc{Diffusion}}
\end{minipage}
\begin{minipage}[b]{\textwidth}
    \centering
    \includegraphics[width=\textwidth]{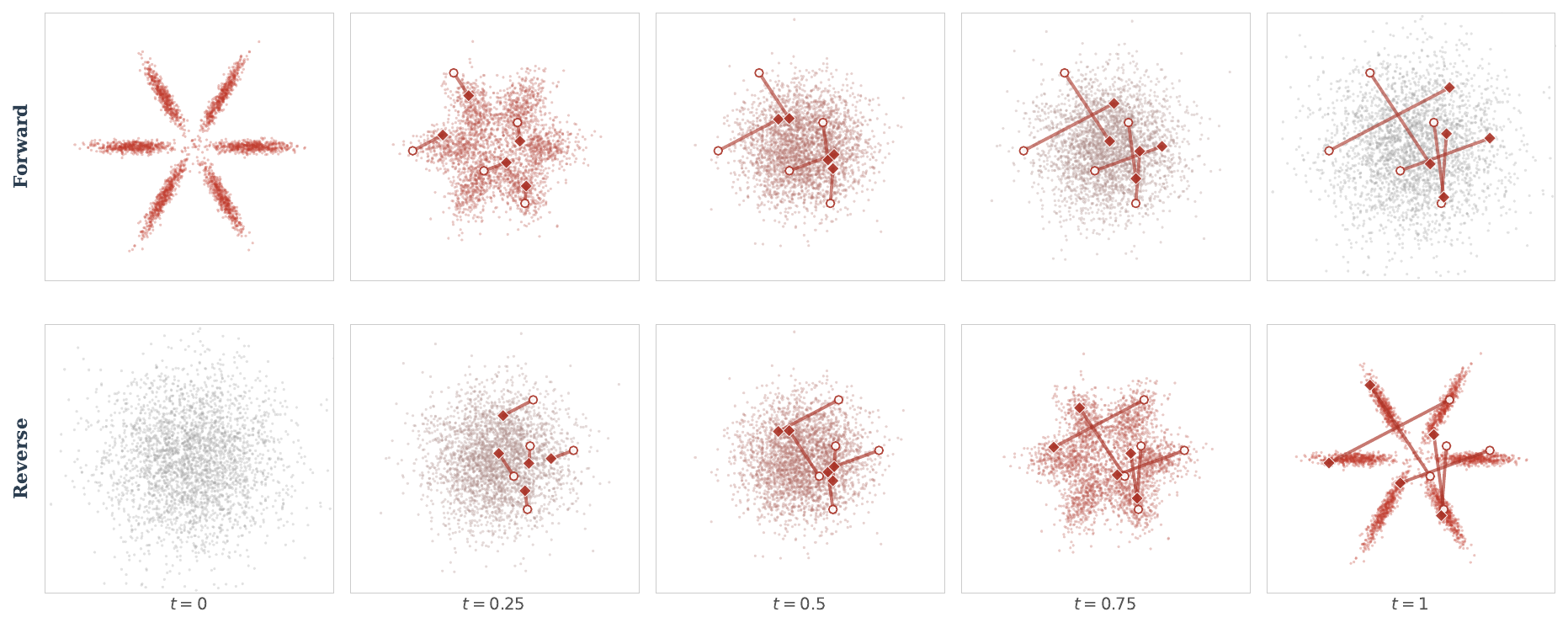}
    \centerline{\small (b) \textsc{Flow Matching}}
\end{minipage}
\caption{Forward and reverse processes of (a) DDPM and (b) Flow Matching
on a pinwheel distribution. $\circ$: trajectory origin;
$\scriptscriptstyle\blacklozenge$: current position. The reverse trajectories of DDPM
are curved due to iterative denoising, whereas those of Flow Matching
are nearly straight due to the OT path structure.}
\label{fig:model_process}
\end{figure}
\subsection{Conditional Flow Matching}

Conditional flow matching models
\citep[FM;][]{lipman2023flow, liu2023flow}
take a different approach: rather than corrupting data and
learning to reverse the corruption, they directly learn a
\emph{transport map} that moves samples from a simple
reference distribution to the conditional data distribution.

The construction is as follows. Let
$y_0 \sim \mathcal{N}(0, I)$ be a reference sample 
and $y$ a target data sample. The model defines a
straight-line path connecting them:
\begin{equation}
y_t
=
(1-t)\, y_0 + t\, y,
\quad
t \in [0,1].
\label{eq:fm_path}
\end{equation}
At $t = 0$, the path starts at the reference noise $y_0$;
at $t = 1$, it arrives at the data sample $y$.
The 
velocity needed to travel along this path is simply
$y - y_0$. A neural network $\hat{v}_\theta(y_t, t, x)$ is trained to
predict this velocity: given a point $y_t$ along the path,
the current time $t$, and the conditioning input $x$, it
outputs the direction and speed needed to continue toward
the target.
Intuitively, the model learns ``which way is the data'' from
any point along the transport path, conditioned on $x$.

To generate new samples, one draws $y_0 \sim \mathcal{N}(0, I)$
and follows the learned velocity field from $t = 0$ to $t = 1$
by solving the ordinary differential equation
$dy_t / dt = \hat{v}_\theta(y_t, t, x)$.
The resulting trajectory carries the noise sample to a point
that approximately follows $p(y \mid x)$.

Diffusion and flow matching can be viewed under the same 
stochastic interpolant framework 
\citep{albergo2025stochasticinterpolantsunifyingframework}. 
A key difference between them is the geometry of the 
transport paths. Flow matching uses linear interpolation 
between paired noise and data samples, producing nearly 
straight trajectories. In contrast, DDPM's iterative 
denoising process produces curved trajectories, as the 
denoising direction depends on the current noisy state 
at each step. This distinction is illustrated in 
Figure~\ref{fig:model_process}, and motivates the 
different nonconformity scores we develop in 
Section~\ref{sec:trace}.


\section{TRACE: Transport Alignment Conformal Estimation}
\label{sec:trace}

The effectiveness of conformal prediction hinges on the choice
of the nonconformity score, which determines
how a scalar threshold maps back into a region in the output space.
We propose defining nonconformity scores directly through  
transport alignment.  They measure the degree to which candidate outputs deviate from the learned generative dynamics, 
while circumventing the need to
specify or estimate the likelihood $p(y \mid x)$ explicitly. 

\subsection{Transport-based nonconformity scores}

Although diffusion models and flow matching models differ in their formulation, both are trained by minimizing squared discrepancies of the form
\begin{equation}
\mathcal{L}(\theta)
=
\mathbb{E}_{(x,y),\, t,\, \xi}
\bigl[
\ell(x, y; t, \xi)
\bigr]\,.
\label{eq:consistency_loss}
\end{equation}
Here, the expectation, $\mathbb{E}$, is taken over the objects in the subscript, where $t$ is a uniformly distributed time index (over $\{1,\dots,T\}$ for diffusion, and over $[0,1]$ for flow matching) and $\xi$ is an auxiliary random variable (injected noise $\varepsilon$ for diffusion, or a reference sample $y_0$ for flow matching).
The instantaneous loss $\ell(\cdot)$ encodes the deviation between the model prediction and the ground-truth transport relation at $(t, \xi)$.

For a fixed input $x$, values of the output $y$ that are well aligned with the learned generative dynamics tend to incur smaller values of 
the loss, $\ell$, while misaligned outputs incur larger values of the loss.  
This motivates us to define the \emph{nonconformity score} of a candidate output $y$ at a given $x$ as the expected transport error
\begin{equation}
\bar{s}(x,y)
=
\mathbb{E}_{t,\xi}
\bigl[
\ell(x, y; t, \xi)
\bigr].
\label{eq:population_score}
\end{equation}
In practice, the expectation in \eqref{eq:population_score} is intractable but can be approximated via Monte Carlo sampling.
Let $\mathcal{T}$ denote a finite set containing the sampled time indices.  For example, $\mathcal{T}$ can be the evenly-spaced grid, $\{\frac{1}{T}, \frac{2}{T}, \cdots, 1\}$, for some user-specified $T$ 
 for diffusion, and a random sample or an evenly spaced grid of $[0,1]$ for flow matching. Let $\{\xi_{t, r}\}_{t\in\mathcal{T},\, r = 1, \dots, R}$ be independent samples of the auxiliary variable.
The \emph{empirical nonconformity score} is then given by
\begin{equation}
s(x,y)
=
\frac{1}{|\mathcal{T}| \cdot R}
\sum_{t \in \mathcal{T}}
\sum_{r=1}^{R}
\ell(x, y; t, \xi_{t,r}),
\label{eq:empirical_score}
\end{equation}
which serves as an 
estimator of $\bar{s}(x,y)$.

This construction does not rely on explicit density evaluation, invertibility, or Jacobian computation.
The score is fully determined by the model's predictive behavior along transport trajectories, making it applicable to generative models trained via denoising or velocity-matching objectives.

\subsection{TRACE-Diff: Diffusion-based score}
\label{sec:trace_diff}

In conditional diffusion models, the forward process perturbs a
target sample $y$ into a noisy latent state $y_t$ according to
\eqref{eq:forward_diffusion}.  
Given $(y_t,t,x)$, the model is trained to predict the injected
Gaussian noise $\varepsilon$, leading to the denoising loss
\[
\ell_{\mathrm{Diff}}(x,y;t,\varepsilon)
=
\left\|
\varepsilon-\hat{\varepsilon}_\theta(y_t,t,x)
\right\|^2 .
\]
This loss provides a natural measure of how well a candidate output $y$ aligns with 
the conditional data-generating distribution 
$p(y\mid x)$. After perturbation, a typical $y$ 
produces a noisy state $y_t$ in high-density regions 
of $p_t(y_t\mid x)$ where the model can accurately 
predict the injected noise, whereas an atypical $y$ 
maps to low-density regions where the denoising task 
is harder and the prediction error increases.

\paragraph{Connection to variational likelihood bounds.}
Besides the above intuitive argument, we provide an analytical 
justification for the diffusion-based score. 
\citet{Lei01032013} showed that, among all 
prediction regions at a given coverage level, density 
level sets achieve the minimum volume when the true 
conditional density is known. We now show that 
TRACE-Diff induces an ordering that approximates 
this oracle through the diffusion variational bound.

For a conditional DDPM with noise-prediction parameterization
$\hat{\varepsilon}_\theta$, the standard variational bound 
admits the decomposition
\citep{DBLP:journals/corr/abs-2006-11239, 
DBLP:journals/corr/abs-2107-00630}
\begin{equation}
-\log p_\theta(y \mid x)
\;\le\;
\mathcal{L}_{\mathrm{VLB}}(y \mid x)
\;=\;
\kappa(x)
\;+\;
\sum_{t=1}^{T}
\lambda_t \;
\mathbb{E}_{\varepsilon}\!\left[
\bigl\|
\varepsilon
-
\hat{\varepsilon}_\theta\bigl(
\sqrt{\bar{\alpha}_t}\, y 
+ \sqrt{1-\bar{\alpha}_t}\, \varepsilon,\;
t,\; x \bigr)
\bigr\|^2
\right],
\label{eq:vlb_decomposition}
\end{equation}
where 
$\lambda_t>0$ are schedule-dependent weights.
Since $\kappa(x)$ is constant in $y$, at a given $x$, the ordering of candidate values $y$
induced by the VLB-weighted score 
$\bar{s}_{\mathrm{VLB}}(x,y) := 
\sum_{t=1}^{T} \lambda_t\,
\mathbb{E}_\varepsilon[\|\varepsilon - 
\hat{\varepsilon}_\theta(\cdot)\|^2]$ 
is equivalent to the ordering induced by the 
variational bound itself.

\begin{proposition}[VLB ordering equivalence]
\label{prop:vlb_ordering}
For any fixed input $x$ and any two candidate outputs 
$y_1, y_2$,
$$
\bar{s}_{\mathrm{VLB}}(x, y_1)
\le
\bar{s}_{\mathrm{VLB}}(x, y_2)
\quad
\text{if and only if}\quad
\mathcal{L}_{\mathrm{VLB}}(y_1 \mid x)
\le
\mathcal{L}_{\mathrm{VLB}}(y_2 \mid x).
$$
\end{proposition}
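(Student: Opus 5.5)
The argument rests on the decomposition \eqref{eq:vlb_decomposition}: for every $y$, $\mathcal{L}_{\mathrm{VLB}}(y\mid x)=\kappa(x)+\bar{s}_{\mathrm{VLB}}(x,y)$. The only thing to confirm is that $\kappa(x)$ really does not depend on $y$. In the standard DDPM derivation, the terms collected into $\kappa(x)$ are of two kinds. One is the prior-matching KL term $\mathrm{KL}(q(y_T\mid y)\,\|\,p(y_T))$, which is constant when the noise schedule is fixed and terminates near pure noise. The others are the $\tfrac{1}{2\sigma_t^2}$-type normalizing constants of the Gaussian reverse kernels, which depend only on the schedule. I will therefore read the paper's statement as asserting that $\kappa(x)$ is a function of $x$ (and the schedule) alone, and take this as given from the cited decomposition. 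Any residual $y$-dependence, such as $\sqrt{\bar\alpha_T}\,y$ in the prior term, is assumed to be absorbed or neglected, exactly as the paper does when it writes $\kappa(x)$.

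With $x$ fixed, put $c:=\kappa(x)\in\mathbb{R}$. Then for candidates $y_1,y_2$,
\[
\mathcal{L}_{\mathrm{VLB}}(y_2\mid x)-\mathcal{L}_{\mathrm{VLB}}(y_1\mid x)=\bar{s}_{\mathrm{VLB}}(x,y_2)-\bar{s}_{\mathrm{VLB}}(x,y_1),
\]
because the constant $c$ cancels. The two differences are equal, so one is nonnegative exactly when the other is, and both implications of the ``if and only if'' follow at once. Put differently, $u\mapsto u+c$ is strictly increasing, and strictly increasing maps preserve and reflect $\le$.

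One finiteness point needs a brief remark. The subtraction requires $\bar{s}_{\mathrm{VLB}}(x,y)<\infty$, so that we never form an expression like $\infty-\infty$. For each $t$, the expectation $\mathbb{E}_\varepsilon\|\varepsilon-\hat\varepsilon_\theta(\cdot)\|^2$ is finite whenever $\hat\varepsilon_\theta$ has finite second moment under Gaussian input, which holds for standard networks with at most linear growth. If a score were infinite, the monotone map $u\mapsto u+c$ extended to $(-\infty,\infty]$ still gives the equivalence.

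The main obstacle is the step in the first paragraph, namely justifying that $\kappa(x)$ is free of $y$. The algebra afterwards is trivial. Since the statement takes the decomposition as given, I would invoke it directly and note the assumption explicitly.
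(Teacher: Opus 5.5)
Your proposal is correct and matches the paper's proof: from \eqref{eq:vlb_decomposition}, $\mathcal{L}_{\mathrm{VLB}}(y\mid x)=\kappa(x)+\bar{s}_{\mathrm{VLB}}(x,y)$, and adding a constant that does not depend on $y$ preserves the ordering in both directions. Your caveat is also fair: in the standard derivation the prior-matching KL term keeps a small $y$-dependence, and the paper, like you, simply takes as given the decomposition with $\kappa(x)$ free of $y$.
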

\begin{proof}
By \eqref{eq:vlb_decomposition}, 
$\mathcal{L}_{\mathrm{VLB}}(y\mid x) = 
\kappa(x) + \bar{s}_{\mathrm{VLB}}(x,y)$, 
where $\kappa(x)$ does not depend on $y$. 
Hence the orderings coincide.
\end{proof}
\begin{remark}[Conformal regions as VLB level sets]
\label{cor:vlb_levelset}
Following proposition~\ref{prop:vlb_ordering}, the conformal prediction region using 
$\bar{s}_{\mathrm{VLB}}$ can be written as
$$
\widehat{C}(x)
=
\{y:\bar{s}_{\mathrm{VLB}}(x,y)\le q_{1-\alpha}\}
=
\{y:\mathcal{L}_{\mathrm{VLB}}(y\mid x)
\le \kappa(x)+q_{1-\alpha}\}.
$$
That is, the conformal region corresponds to a level 
set of the diffusion variational bound.
\end{remark}


\begin{remark}[Uniform vs.\ VLB weighting]
\label{rem:uniform_vs_vlb}
TRACE-Diff uses uniform weighting over time steps,
yielding the nonconformity score $\bar{s}(x,y)$
in \eqref{eq:population_score}.
The VLB decomposition instead uses schedule-dependent
weights $\lambda_t$, yielding $\bar{s}_{\mathrm{VLB}}(x,y)$.
The two weightings induce similar orderings when
$\lambda_t$ varies moderately over $t$.
Indeed, uniform weighting is standard in practice:
it down-weights loss terms at small $t$ relative to
the VLB weights, encouraging the model to focus on
more difficult denoising tasks at larger $t$, which
has been shown to yield better sample quality
\citep{DBLP:journals/corr/abs-2006-11239}.
\end{remark}

\subsection{TRACE-FM: Flow-matching-based score}
\label{sec:trace_fm}

For flow matching models, the transport path connects a
reference sample $y_0 \sim \mathcal{N}(0,I)$ to the data
sample $y$ via linear interpolation as in
\eqref{eq:fm_path}.
Given the interpolated state $(y_t, t, x)$, the model is
trained to predict the transport velocity $y - y_0$.
This yields the instantaneous loss
\[
\ell_{\mathrm{FM}}(x, y; t, y_0)
=
\bigl\|
\hat{v}_\theta\bigl(
(1-t)\, y_0 + t\, y,\;
t,\; x\bigr)
-
(y - y_0)
\bigr\|^2.
\]
If $y$ is consistent with $p(y\mid x)$, then the 
interpolated state $y_t$ lies in regions where the model 
has accurately learned the velocity field, and the 
prediction error is small; whereas if $y$ is incompatible with 
$p(y\mid x)$, the error increases.

Compared with the diffusion-based score, flow matching
benefits from optimal-transport paths that encourage
straight trajectories, so the velocity field varies
more smoothly across reference samples $y_0$.
This leads to lower Monte Carlo variance in the
empirical score, as observed empirically in
Section~\ref{sec:ablation_mc}.

\subsection{Deterministic evaluation via common random numbers}
\label{sec:crn}

The empirical scores in \eqref{eq:empirical_score} 
involve Monte Carlo averaging over auxiliary random variables
(noise $\varepsilon$ for diffusion, reference samples $y_0$ for flow matching).
If independent random draws were used for each score evaluation,
the resulting nonconformity scores would themselves be random,
introducing extra variability into the calibration ordering.

To eliminate this source of randomness, we adopt a
\emph{common random numbers} (CRN) strategy.
Prior to any score computation, we draw a single bank of auxiliary variables
$$
\omega
=
\{\xi_{t,r}\}_{t \in \mathcal{T},\, r=1,\dots,R},
$$
where $\xi_{t,r}=\varepsilon_{t,r}$ in diffusion models and
$\xi_{t,r}=y_0^{t,r}$ in flow matching.
This bank is generated once and reused for all score evaluations.

Conditioned on a fixed bank $\omega$,
the resulting score
$$
s_\omega(x,y)
=
\frac{1}{|\mathcal{T}| \cdot R}
\sum_{t \in \mathcal{T}}
\sum_{r=1}^{R}
\ell(x,y;t,\xi_{t,r})
$$
is a deterministic function of $(x,y)$,
and 
the nonconformity ordering
across calibration and test points is consistently
determined. 
Hence, all exchangeability-based
coverage guarantees hold for conformal prediction, independent of how accurately
$s_\omega$ approximates the ideal population score.

Despite the theoretical guarantee, practitioners may view it as a potential limitation of the CRN method that different draws of the CRN bank can produce different scoring functions, leading to varying calibration thresholds and prediction regions. 
Section~\ref{sec:theory} showed that this bank-level variability decreases at a controlled rate with the total Monte Carlo budget $|\mathcal{T}| \cdot R$, and empirical experiments suggest that, with a moderately sized bank, this variability has little practical impact.

\begin{algorithm}[ht]
\caption{TRACE: Transport Alignment Conformal Estimation}
\label{alg:consistency_conformal}
\begin{algorithmic}[1]
\State \textbf{Input:}
\begin{itemize}
    \item Dataset $\mathcal{D} = \{(x_i, y_i)\}_{i=1}^n$
    \item Miscoverage level $\alpha \in (0,1)$
    \item Generative model class (diffusion or flow matching)
    \item Monte Carlo parameters: time indices $\mathcal{T}$, the number of repeats $R$
    \item Test input $x_{\mathrm{new}}$
\end{itemize}

\State \textbf{Training phase:}
\begin{enumerate}
    \item Randomly split $\mathcal{D}$ into training set $\mathcal{D}_{\mathrm{train}}$ and calibration set $\mathcal{D}_{\mathrm{cal}} = \{(x_i, y_i)\}_{i=1}^{n_{\mathrm{cal}}}$.
    \item Train a conditional generative model $\hat{\varepsilon}_\theta$ (diffusion) or $\hat{v}_\theta$ (flow matching) on $\mathcal{D}_{\mathrm{train}}$.
\end{enumerate}

\State \textbf{CRN construction:}
\begin{enumerate}
    \setcounter{enumi}{2}
    \item Pre-generate a fixed bank of auxiliary variables shared across all evaluations:
    \begin{itemize}
        \item Diffusion: $\{\varepsilon_{t,r}\}_{t \in \mathcal{T},\, r=1,\dots,R}$, \; $\varepsilon_{t,r} \sim \mathcal{N}(0, I)$
        \item Flow matching: $\{y_0^{t,r}\}_{t \in \mathcal{T},\, r=1,\dots,R}$, \; $y_0^{t,r} \sim \mathcal{N}(0, I)$
    \end{itemize}
\end{enumerate}

\State \textbf{Calibration phase:}
\begin{enumerate}
    \setcounter{enumi}{3}
    \item For each $(x_i, y_i) \in \mathcal{D}_{\mathrm{cal}}$, compute the nonconformity score $s(x_i, y_i)$ via \eqref{eq:empirical_score} using the shared CRN bank.
    \item Compute the conformal threshold $q_{1-\alpha}$.
\end{enumerate}

\State \textbf{Output:}
Prediction region for $x_{\mathrm{new}}$:
$$\widehat{C}(x_{\mathrm{new}}) = \bigl\{\, y \in \mathcal{Y} : s(x_{\mathrm{new}}, y) \le q_{1-\alpha} \,\bigr\}.$$
\end{algorithmic}
\end{algorithm}

\section{Theoretical Analysis of Finite-Budget TRACE Scores}
\label{sec:theory}

The preceding section showed that TRACE scores of \eqref{eq:population_score} induce a useful ordering that essentially respects the diffusion variational bound. In practice, empirical scores are used, 
relying on
Monte Carlo approximation via a sample of time points, $\mathcal{T}$, and the corresponding auxiliary bank $\omega$. 
Further, for FM, the TRACE score is an integral over continuous time $[0,1]$, which is in practice approximated by sum over a discrete set of times. 

We quantify bank-induced approximation errors in Sections~\ref{sec:score_approx}–\ref{sec:region_implications} and time-discretization errors in Section~\ref{sec:discretization}; the former affects all TRACE methods, while the latter applies only to TRACE-FM. 
We discuss the impact of these errors on the conformal calibration threshold
and volume of the prediction region.
Throughout this section, $B = |\mathcal{T}| \cdot R$
denote the total Monte Carlo budget.

\subsection{Score Approximation Error}\label{sec:score_approx}
The approximation error of the empirical score $s_\omega$
for the continuous-time population score $\bar{s}$ has two components:
the bank-induced stochastic error, 
$s_\omega - \bar{s}_{\mathcal{T}}$; 
and the time-discretization error,  
$\bar{s}_{\mathcal{T}}-\bar{s}$.
Sections~\ref{sec:score_approx}--\ref{sec:region_implications} analyze
the first component;
Section~\ref{sec:discretization} bounds the second.

Fix a 
set of time indices $\mathcal{T}$.
The time-discretized population score is
\begin{equation}
\bar{s}_{\mathcal{T}}(x,y)
=
\frac{1}{|\mathcal{T}|}
\sum_{t \in \mathcal{T}}
\mathbb{E}_{\xi}
\bigl[
\ell(x,y;t,\xi)
\bigr].
\label{eq:population_score_T_journal}
\end{equation}

The empirical score constructed from a CRN bank
$\omega = \{\xi_{t,r}\}_{t \in \mathcal{T}, r=1,\dots,R}$
is
\begin{equation}
s_\omega(x,y)
=
\frac{1}{|\mathcal{T}|R}
\sum_{t \in \mathcal{T}}
\sum_{r=1}^{R}
\ell(x,y;t,\xi_{t,r}).
\label{eq:empirical_score_journal}
\end{equation}


\begin{assumption}[Finite second moment and independent auxiliary draws]
\label{assump:moment}
For all $(x,y)$ and $t \in \mathcal{T}$,
$\mathbb{E}_{\xi}[\ell(x,y;t,\xi)^2] < \infty$.
The auxiliary variables $\{\xi_{t,r}\}_{t \in \mathcal{T},\, r=1,\dots,R}$
are mutually independent.
\end{assumption}
\begin{theorem}[Score approximation rate]
\label{thm:score_mse}
Under Assumption~\ref{assump:moment},
for any fixed $(x,y)$,
$$
\mathbb{E}_\omega\!\left[
\bigl(
s_\omega(x,y)
-
\bar{s}_{\mathcal{T}}(x,y)
\bigr)^2
\right]
=
\frac{1}{|\mathcal{T}|^2}
\sum_{t \in \mathcal{T}}
\frac{
\mathrm{Var}_{\xi}
\bigl(
\ell(x,y;t,\xi)
\bigr)
}{R}.
$$
Consequently,
$$
\mathbb{E}_\omega
\bigl[\bigl|
s_\omega(x,y)
-
\bar{s}_{\mathcal{T}}(x,y)
\bigr|\bigr]
=
O\!\left(\frac{1}{\sqrt{B}}\right).
$$
\end{theorem}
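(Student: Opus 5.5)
The plan is a direct variance computation followed by Jensen's inequality. For fixed $(x,y)$ and $t\in\mathcal{T}$, write $\mu_t = \mathbb{E}_\xi[\ell(x,y;t,\xi)]$ and $Z_{t,r} = \ell(x,y;t,\xi_{t,r}) - \mu_t$. The first thing to note is that $\mu_t$ is finite: Assumption~\ref{assump:moment} gives a finite second moment, and Cauchy--Schwarz then gives a finite first moment. With this notation,
$$
s_\omega(x,y)-\bar{s}_{\mathcal{T}}(x,y)=\frac{1}{|\mathcal{T}|R}\sum_{t\in\mathcal{T}}\sum_{r=1}^R Z_{t,r}.
$$
Each $Z_{t,r}$ has mean zero and variance $\sigma_t^2:=\mathrm{Var}_\xi(\ell(x,y;t,\xi))<\infty$.

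Next, expand the square of this double sum and take $\mathbb{E}_\omega$. The cross terms $\mathbb{E}[Z_{t,r}Z_{t',r'}]$ with $(t,r)\neq(t',r')$ vanish. This uses the mutual independence of the bank entries in Assumption~\ref{assump:moment}, together with the mean-zero property and integrability of the products (each factor is in $L^2$). Only the diagonal terms survive, which gives
$$
\mathbb{E}_\omega\bigl[(s_\omega-\bar{s}_{\mathcal{T}})^2\bigr]=\frac{1}{|\mathcal{T}|^2R^2}\sum_{t}\sum_{r}\sigma_t^2=\frac{1}{|\mathcal{T}|^2}\sum_{t\in\mathcal{T}}\frac{\sigma_t^2}{R}.
$$
For this step I am implicitly assuming that, for each $t$, the draws $\xi_{t,r}$ share the common auxiliary distribution used in \eqref{eq:population_score_T_journal}. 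That is, $\varepsilon_{t,r}\sim\mathcal{N}(0,I)$ or $y_0^{t,r}\sim\mathcal{N}(0,I)$, as in Algorithm~\ref{alg:consistency_conformal}.

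For the rate, apply Jensen (equivalently, Cauchy--Schwarz) to get $\mathbb{E}|s_\omega-\bar{s}_{\mathcal{T}}|\le(\mathbb{E}(s_\omega-\bar{s}_{\mathcal{T}})^2)^{1/2}$. Then bound the variance sum by $|\mathcal{T}|\,\bar\sigma^2$, where $\bar\sigma^2=\max_{t\in\mathcal{T}}\sigma_t^2$; the average $\frac{1}{|\mathcal{T}|}\sum_t\sigma_t^2$ also works. This yields the mean-squared error bound $\bar\sigma^2/(|\mathcal{T}|R)=\bar\sigma^2/B$, and hence $\mathbb{E}|\cdot|\le\bar\sigma/\sqrt{B}=O(1/\sqrt{B})$.

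The main subtlety is how to read the $O(1/\sqrt{B})$ statement: the constant must not grow with $|\mathcal{T}|$. With the average-variance constant $\bigl(\frac{1}{|\mathcal{T}|}\sum_t\sigma_t^2\bigr)^{1/2}$ this holds, provided the per-time variances stay bounded uniformly in $t$. That uniformity is automatic for a fixed finite grid. If $\mathcal{T}$ is allowed to grow, I would make it explicit as a mild uniform-moment condition, $\sup_t\sigma_t^2<\infty$. Everything else in the argument is routine.
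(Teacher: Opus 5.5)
Your proposal is correct and follows essentially the same route as the paper. The paper writes $s_\omega-\bar{s}_{\mathcal{T}}$ as an average of per-time means $\hat\mu_t$ and uses independence to get $\mathrm{Var}_\omega(\hat\mu_t)=\mathrm{Var}_\xi(\ell)/R$ with additivity across $t$; it then bounds the result by $C/B$ with $C=\sup_t \mathrm{Var}_\xi(\ell)$ and applies Cauchy--Schwarz, which matches your vanishing-cross-terms computation followed by Jensen, and your remarks on the implicit identical-distribution requirement and on a $|\mathcal{T}|$-uniform variance bound make explicit what the paper leaves implicit.
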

The proof is provided in Appendix~\ref{app:proof_score_mse}. This basic result shows that the bank-level mean error decays at rate $O(1/\sqrt{B})$ and underpins the subsequent analysis of conformal regions.

\subsection{Threshold Stability Under Bank Perturbations}\label{sec:threshold_stability}

Recall $\mathcal{D}_{\mathrm{cal}} =\{(x_i,y_i)\}_{i=1}^{n_{\mathrm{cal}}}$ denotes the calibration set, with corresponding calibration scores
$s_\omega(x_i,y_i)$ and conformal threshold
$$
q_{1-\alpha}^{\omega}
=
\text{the }
\left\lceil (1-\alpha)(n_{\mathrm{cal}}+1) \right\rceil
\text{-th order statistic}.
$$
Let $q_{1-\alpha}^{\mathcal{T}}$
denote the analogous threshold constructed using the ideal
scores $\bar{s}_{\mathcal{T}}(x_i,y_i)$ over $\mathcal{D}_{\mathrm{cal}}$.

\begin{theorem}[Threshold stability]
\label{thm:threshold_rate}
Under Assumption~\ref{assump:moment},
$$
\mathbb{E}_\omega
\bigl[
\bigl|
q_{1-\alpha}^{\omega}
-
q_{1-\alpha}^{\mathcal{T}}
\bigr|
\bigr]
\le
2\sqrt{\frac{n_{\mathrm{cal}}\, C}{B}},
$$
where $C := \max_{1 \le i \le n_{\mathrm{cal}}} \sup_{t \in \mathcal{T}} \mathrm{Var}_\xi\bigl(\ell(x_i,y_i;t,\xi)\bigr)$.
\end{theorem}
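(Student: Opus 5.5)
The plan has three steps: reduce the difference of the two thresholds to the largest score error on the calibration set, bound that maximum by a sum of squares, and then apply Theorem~\ref{thm:score_mse} to each term. Throughout, the calibration set is held fixed and only the bank $\omega$ is random.

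\emph{Step 1 (order statistics are $1$-Lipschitz in sup norm).} Set $k=\lceil (1-\alpha)(n_{\mathrm{cal}}+1)\rceil$. Define $a_i=s_\omega(x_i,y_i)$, $b_i=\bar{s}_{\mathcal{T}}(x_i,y_i)$ and $\Delta=\max_i|a_i-b_i|$. Then $a_i\le b_i+\Delta$ for every $i$. The map taking a vector to its $k$-th order statistic is monotone and commutes with adding a constant, so $a_{(k)}\le b_{(k)}+\Delta$. By symmetry $b_{(k)}\le a_{(k)}+\Delta$, and hence $|q_{1-\alpha}^{\omega}-q_{1-\alpha}^{\mathcal{T}}|\le \Delta$ deterministically. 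If $k>n_{\mathrm{cal}}$, both thresholds equal $+\infty$ and the statement is trivial, so we may assume $k\le n_{\mathrm{cal}}$.

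\emph{Step 2 (bound the maximum by a sum).} Use $\Delta\le \bigl(\sum_{i=1}^{n_{\mathrm{cal}}}(a_i-b_i)^2\bigr)^{1/2}$ and apply Jensen's inequality to get
\[
\mathbb{E}_\omega[\Delta]\le \Bigl(\sum_{i=1}^{n_{\mathrm{cal}}}\mathbb{E}_\omega\bigl[(s_\omega(x_i,y_i)-\bar{s}_{\mathcal{T}}(x_i,y_i))^2\bigr]\Bigr)^{1/2}.
\]
This step does not need independence across $i$, which matters because all calibration points share the same bank. A union or maximal bound is replaced by a sum of second moments, and that is what produces the $\sqrt{n_{\mathrm{cal}}}$ factor.

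\emph{Step 3 (apply Theorem~\ref{thm:score_mse}).} For each $i$, Theorem~\ref{thm:score_mse} gives
\[
\mathbb{E}_\omega\bigl[(s_\omega(x_i,y_i)-\bar{s}_{\mathcal{T}}(x_i,y_i))^2\bigr]=\frac{1}{|\mathcal{T}|^2R}\sum_{t\in\mathcal{T}}\mathrm{Var}_\xi(\ell(x_i,y_i;t,\xi))\le \frac{|\mathcal{T}|\,C}{|\mathcal{T}|^2R}=\frac{C}{B}.
\]
Summing over $i$ and taking the square root yields $\mathbb{E}_\omega|q^{\omega}_{1-\alpha}-q^{\mathcal{T}}_{1-\alpha}|\le\sqrt{n_{\mathrm{cal}}C/B}$, which is within the stated bound $2\sqrt{n_{\mathrm{cal}}C/B}$. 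The factor $2$ in the statement presumably comes from bounding the upper and lower deviations of the order statistic separately; the symmetric argument in Step 1 avoids that loss.

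The only step requiring care is the order-statistic perturbation bound in Step 1, in particular the ties and edge cases just handled. The rest is routine.
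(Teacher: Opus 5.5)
Your proof is correct, and it reaches the bound by a different route from the paper in the second step.

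\emph{Step 1.} This coincides with the paper's deterministic order-statistic bound $|q^{\omega}_{1-\alpha}-q^{\mathcal{T}}_{1-\alpha}|\le\max_i|\varepsilon_i|$, where $\varepsilon_i=s_\omega(x_i,y_i)-\bar{s}_{\mathcal{T}}(x_i,y_i)$. The paper states this bound without proof; your monotonicity-plus-shift argument supplies one.

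\emph{Step 2 in the paper.} To control $\mathbb{E}_\omega[W]$ with $W=\max_i|\varepsilon_i|$, the paper writes $\mathbb{E}_\omega[W]=\int_0^\infty\mathbb{P}_\omega(W>t)\,dt$. It then combines a union bound over calibration points with Chebyshev to get $\mathbb{P}_\omega(W>t)\le n_{\mathrm{cal}}C/(Bt^2)$, and splits the integral at $t_0=\sqrt{n_{\mathrm{cal}}C/B}$. The two pieces each contribute $t_0$, and that is where the factor $2$ comes from. It does not come from treating upper and lower deviations separately, as you guessed.

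\emph{Your Step 2.} You bound $\max_i|\varepsilon_i|\le(\sum_i\varepsilon_i^2)^{1/2}$ and then apply Jensen. This is shorter and gives the sharper constant $1$. Like the paper's argument, it uses only the marginal second moment of each $\varepsilon_i$ from Theorem~\ref{thm:score_mse} and no independence across calibration points, so the shared bank is harmless.

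\emph{What each route buys.} The paper's union-bound/tail-integral template extends better under stronger assumptions. For example, with bounded or sub-Gaussian losses, each $\varepsilon_i$ has exponential tails, and the same argument yields a $\sqrt{\log n_{\mathrm{cal}}}$ dependence. Bounding the maximum by the $\ell_2$ norm always pays $\sqrt{n_{\mathrm{cal}}}$. Under the second-moment Assumption~\ref{assump:moment} alone, both routes give $\sqrt{n_{\mathrm{cal}}}$, and yours is tighter by a factor of $2$.
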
 
This theorem bounds the error of using CRN-based conformal threshold over that of the ideal scores.    The proof can be found in Appendix~\ref{app:proof_threshold}, which
combines Theorem~\ref{thm:score_mse}
with deterministic stability properties of order statistics. 

\subsection{Implications for Prediction Regions}\label{sec:region_implications}

For each fixed bank $\omega$,
the conformal region is
$$
\widehat{C}^{\omega}(x)
=
\{
y :
s_\omega(x,y)
\le
q_{1-\alpha}^{\omega}
\}.
$$
The volume of this sublevel set depends on the threshold $q_{1-\alpha}^{\omega}$,
which in turn depends on the bank $\omega$. When the function $s$
is continuous in $y$ and the level-set boundary
$\{y : s_\omega(x,y) = q\}$ has finite surface measure,  $V_\omega(x,q)$ is Lipschitz in $q$, which is typical for smooth neural-network-based TRACE scores. 
Under these conditions,  
different banks yield
similar region volumes, and  
the bank-induced error in volume
vanishes at rate $O(1/\sqrt{B})$: 
\begin{corollary}[Volume stability]
\label{cor:volume_stability}
For a given bank $\omega$, let 
$$
V_\omega(x,q)
=
\mathrm{Vol}\bigl(\{y : s_\omega(x,y) \le q\}\bigr)
$$
denote the prediction region volume as a function of the threshold. 
Assume that $V_\omega(x,q)$ is locally Lipschitz in $q$ with constant $L$
uniformly over $\omega$. 
Then under Assumption~\ref{assump:moment},
$$
\mathbb{E}_\omega
\bigl[
|V_\omega(x,q_{1-\alpha}^{\omega})
-
V_\omega(x,q_{1-\alpha}^{\mathcal{T}})|
\bigr]
\le
L \,
\mathbb{E}_\omega
\bigl[
|q_{1-\alpha}^{\omega}
-
q_{1-\alpha}^{\mathcal{T}}|
\bigr]
=
O\!\left(\frac{1}{\sqrt{B}}\right).
$$
\end{corollary}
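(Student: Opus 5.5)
The plan is to apply the Lipschitz assumption pointwise in $\omega$, take expectations, and then invoke Theorem~\ref{thm:threshold_rate}. Nothing beyond these two ingredients should be needed.

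\textbf{Step 1 (pointwise bound).} Fix a bank $\omega$ and the input $x$. Apply the assumed Lipschitz property of $q\mapsto V_\omega(x,q)$ with the uniform constant $L$ at the two thresholds $q_{1-\alpha}^{\omega}$ and $q_{1-\alpha}^{\mathcal{T}}$. This gives the almost-sure inequality
\[
|V_\omega(x,q_{1-\alpha}^{\omega})-V_\omega(x,q_{1-\alpha}^{\mathcal{T}})|\le L\,|q_{1-\alpha}^{\omega}-q_{1-\alpha}^{\mathcal{T}}|.
\]
The reference threshold $q_{1-\alpha}^{\mathcal{T}}$ does not depend on $\omega$, since it is built from the ideal scores $\bar{s}_{\mathcal{T}}(x_i,y_i)$. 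Only the function $V_\omega$ and the threshold $q_{1-\alpha}^{\omega}$ vary with the bank.

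\textbf{Step 2 (expectation).} Both sides are nonnegative measurable functions of $\omega$, so taking $\mathbb{E}_\omega$ preserves the inequality. This yields the first inequality of the corollary. Because $L$ is uniform over $\omega$, it factors out of the expectation.

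\textbf{Step 3 (rate).} Apply Theorem~\ref{thm:threshold_rate} to bound the right-hand side by $2L\sqrt{n_{\mathrm{cal}}C/B}$. With $n_{\mathrm{cal}}$, $C$ and $L$ held fixed, this is $O(1/\sqrt{B})$.

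\textbf{Main obstacle: the word ``locally'' in the hypothesis.} A locally Lipschitz function need only satisfy the Lipschitz bound on a neighborhood of each point. The two thresholds may lie far apart for some banks, so the bound in Step 1 is not automatic. I would handle this in one of two ways.
\begin{itemize}
\item \emph{Interpret the constant as global.} Read ``locally Lipschitz with constant $L$ uniformly over $\omega$'' as Lipschitz on an interval $I$ that contains every threshold that can occur. A chaining argument over a segment covered by local neighborhoods, each with the same constant $L$, gives a Lipschitz bound with constant $L$ on the whole segment between the two thresholds.
\item \emph{Restrict to a good event.} If the constant is only valid on a fixed neighborhood $I$ of $q_{1-\alpha}^{\mathcal{T}}$, work on the event $\{q_{1-\alpha}^{\omega}\in I\}$. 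By Markov's inequality and Theorem~\ref{thm:threshold_rate}, the complement has probability $O(1/\sqrt{B})$. On the complement, bound the volume difference using a uniform bound on the volumes. This recovers the same rate, at the price of an extra boundedness assumption.
\end{itemize}
I would present the first reading as the intended one and note the second in a remark.
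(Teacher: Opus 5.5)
Your proposal is correct and follows the paper's argument, which the paper leaves implicit. It applies the Lipschitz bound pointwise in $\omega$ at the two thresholds, takes $\mathbb{E}_\omega$, and invokes Theorem~\ref{thm:threshold_rate} to get the explicit bound $2L\sqrt{n_{\mathrm{cal}}C/B}$, the same constant used in Remark~\ref{rem:high_prob}. Your discussion of the word ``locally'' goes beyond what the paper does: the paper effectively adopts your first reading, and your chaining argument correctly upgrades a uniform local constant $L$ to an $L$-Lipschitz bound on the whole segment between the two thresholds.
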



\begin{remark}[High-probability bound]
\label{rem:high_prob}
By Markov's inequality, Corollary~\ref{cor:volume_stability}
implies that for any $\delta > 0$,
$$
\mathbb{P}_\omega\!\left(
|V_\omega(x, q_{1-\alpha}^{\omega}) - V_\omega(x, q_{1-\alpha}^{\mathcal{T}})|
> \delta
\right)
\leq
\frac{2L\sqrt{n_{\mathrm{cal}}\, C / B}}{\delta}.
$$
\end{remark}

\begin{remark}[Marginal coverage guarantee] 
The above results concern the variation in volume incurred by the prediction region due to finite-budget approximation. Approximation error affects efficiency but not validity. That is, for any fixed $\omega$,
the scoring rule $s_\omega$ is deterministic
and applied consistently to calibration and test points.
Therefore, under exchangeability,
$$
\mathbb{P}\bigl(
Y \in \widehat{C}^{\omega}(X)
\bigr)
\ge 1-\alpha
$$
holds exactly for every finite budget $B$.
\end{remark}

\subsection{Time-Discretization Error}
\label{sec:discretization}

The preceding analysis fixes a discrete time set 
$\mathcal{T}$ and quantifies the bank-induced error 
$s_\omega - \bar{s}_{\mathcal{T}}$. For flow matching, 
the time index $t$ is continuous on $[0,1]$, so the 
 score involves an integral over time,
$$
\bar{s}(x,y)
=
\int_0^1
\mathbb{E}_{\xi}[\ell(x,y;t,\xi)]\, dt,
$$
and replacing it with a finite sum over 
$\mathcal{T}$ introduces a discretization error. 
For diffusion models with a discrete time schedule 
$t \in \{1,\dots,T\}$, this error is zero when 
$\mathcal{T}$ contains all diffusion steps. We now bound the discretization error for the 
continuous-time case.



\begin{assumption}[Lipschitz regularity in time]
\label{assump:lipschitz_time}
For all $(x,y)$, the function
$t \mapsto \mu(t;x,y) := \mathbb{E}_{\xi}[\ell(x,y;t,\xi)]$
is $L$-Lipschitz on $[0,1]$:
$$
\bigl| \mu(t_1;x,y) - \mu(t_2;x,y) \bigr|
\le
L\, |t_1 - t_2|
\qquad
\forall\; t_1, t_2 \in [0,1].
$$
\end{assumption}

\begin{proposition}[Discretization error bound]
\label{prop:discretization}
Under Assumption~\ref{assump:lipschitz_time},
if $\mathcal{T} = \{t_1, \dots, t_m\}$ is a uniform grid on $[0,1]$
with spacing $\Delta = 1/m$,
then for any $(x,y)$,
$$
\bigl|
\bar{s}(x,y)
-
\bar{s}_{\mathcal{T}}(x,y)
\bigr|
\le
\frac{L}{2\,|\mathcal{T}|}.
$$
\end{proposition}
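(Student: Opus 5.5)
The plan is to write the continuous-time score as a sum of integrals over the $m$ cells of the uniform grid and compare each integral with the single grid value that $\bar{s}_{\mathcal{T}}$ assigns to that cell. With $\mu(t;x,y)=\mathbb{E}_\xi[\ell(x,y;t,\xi)]$ as in Assumption~\ref{assump:lipschitz_time}, we have $\bar{s}(x,y)=\int_0^1\mu(t;x,y)\,dt$ and $\bar{s}_{\mathcal{T}}(x,y)=\frac1m\sum_{k=1}^m\mu(t_k;x,y)$, where $|\mathcal{T}|=m$ and $\Delta=1/m$.

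First, partition $[0,1]$ into the cells $I_k=[(k-1)\Delta,k\Delta]$, $k=1,\dots,m$. We interpret "uniform grid with spacing $\Delta$" as placing exactly one node $t_k\in I_k$ in each cell. This covers the grid $\{1/m,\dots,1\}$ used in Section~\ref{sec:trace}, where every node is the right endpoint of its cell, and also left-endpoint or midpoint grids. Since $|I_k|=\Delta$, we can write $\frac1m\mu(t_k)=\int_{I_k}\mu(t_k)\,dt$. Hence
\[
\bar{s}(x,y)-\bar{s}_{\mathcal{T}}(x,y)=\sum_{k=1}^m\int_{I_k}\bigl(\mu(t;x,y)-\mu(t_k;x,y)\bigr)\,dt .
\]

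Second, apply the triangle inequality and the Lipschitz bound of Assumption~\ref{assump:lipschitz_time} inside each integral:
\[
\bigl|\bar{s}-\bar{s}_{\mathcal{T}}\bigr|\le L\sum_{k=1}^m\int_{I_k}|t-t_k|\,dt .
\]
For a node $c$ in an interval of length $\Delta$, the integral $\int_{I_k}|t-c|\,dt$ is a convex function of $c$. It is therefore maximized at an endpoint, where it equals $\Delta^2/2$. (At the midpoint it equals $\Delta^2/4$, which would give a bound twice as sharp.) Summing over the $m$ cells gives
\[
L\,m\,\frac{\Delta^2}{2}=\frac{L}{2m}=\frac{L}{2|\mathcal{T}|},
\]
which is the claim.

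The only step needing care is the convention for where the grid nodes sit. The bound $L/(2|\mathcal{T}|)$ holds exactly when each cell of length $1/m$ contains one node, which includes the endpoint grids. If a grid were instead specified as $m$ points including both $0$ and $1$, the spacing would be $1/(m-1)$ rather than $1/m$, so that case falls outside the statement's hypothesis of spacing $\Delta=1/m$. I would add a brief remark fixing the one-node-per-cell convention. All other steps are routine, and they only need $\mu(\cdot;x,y)$ to be integrable, which follows from its being Lipschitz.
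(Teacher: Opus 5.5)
Your proof is correct and is essentially the paper's argument: split $[0,1]$ into cells $I_j$ of length $\Delta$ with $t_j\in I_j$, bound each cell's error by $L\int_{I_j}|t-t_j|\,dt\le L\Delta^2/2$ (worst case at an endpoint), and sum to get $L/(2m)$. Your one-node-per-cell convention states explicitly what the paper assumes when it writes $t_j\in I_j$. One small correction to your closing remark: the endpoint-inclusive grid $t_k=(k-1)/(m-1)$ also satisfies $t_k\in[(k-1)/m,\,k/m]$, so your argument covers it as well.
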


\begin{proof}
Write
$\bar{s}(x,y) = \int_0^1 \mu(t)\, dt$
and
$\bar{s}_{\mathcal{T}}(x,y) = \frac{1}{m}\sum_{j=1}^m \mu(t_j)$.
Under the Lipschitz condition, the standard rectangle-rule
quadrature error satisfies
$|\int_0^1 \mu\, dt - \frac{1}{m}\sum_j \mu(t_j)|
\le L \Delta / 2 = L / (2m)$. Details can be found in Appendix \ref{app:proof_discretization}.
\end{proof}


\begin{remark}[Combined error budget]
\label{rem:combined_error}
Combining Theorem~\ref{thm:score_mse}
and Proposition~\ref{prop:discretization}
via the triangle inequality gives
$$
\mathbb{E}_\omega\!\left[
\bigl|
s_\omega(x,y) - \bar{s}(x,y)
\bigr|
\right]
\le
\underbrace{
\frac{L}{2\,|\mathcal{T}|}
}_{\text{discretization}}
+
\underbrace{
O\!\left(\frac{1}{\sqrt{B}}\right)
}_{\text{bank-induced}}.
$$
where the first term is the deterministic discretization error and
the second arises from Monte Carlo approximation with total budget
$B = |\mathcal{T}|\cdot R$.
For a fixed budget $B$, increasing the number of time steps
$|\mathcal{T}|$ reduces discretization error while increases
the bank-induced variance,  
and vice versa.
This bias--variance trade-off is consistent with the empirical
ablation results in Section~\ref{sec:ablation_mc}.
\end{remark}

\section{Experiments}
\label{sec:experiments}

\subsection{Experimental setup}

\paragraph{Datasets.}
We experiment with two synthetic and five real-world datasets covering a range of output geometries and dimensionalities.

\noindent \textbf{(1) Synthetic data.} The synthetic datasets are generated under the general form, $Y = k \cdot f(X) + \varepsilon$, where $Y\in \mathbb{R}^2$, $X\in \mathbb{R}^p$, 
$f$ is chosen from various nonlinear functions, $k>0$ is a scalar controlling the signal strength, and $\varepsilon$ is independent noise with  non-Gaussian distributions that have non-convex contours or multiple modes. Specifically, we use a $2 \times 2$ factorial design in our experiment
(Table~\ref{tab:2x2_design}) that varies two factors:
\emph{noise type}
(unimodal spiral vs.\ six-component pinwheel) and
\emph{model difficulty}
($X \in \mathbb{R}^2$ with $k{=}1$ vs.\
$X \in \mathbb{R}^7$ with $k{=}5$ and five nuisance
dimensions), with details provided in Appendix~\ref{app:synthetic_data}. Briefly, for the noise type, the distribution in \textsc{Spiral} has curved, non-convex contours, while that in \textsc{Pinwheel}  consists of six elongated Gaussian clusters, forming deep concavities between the arms. Examples of the resulting geometry of the conditional density $y | X=x$ can be seen in Figure~\ref{fig:regions_synthetic}. For model difficulty, regardless of the dimension of the input, only two of them affect the output: the 7-dimensional case included five nuisance features to test the model's ability to identify the informative ones. 
Each of the four datasets has $n = 30{,}000$ samples, with $y$
normalized to zero mean and unit variance.

\begin{table}[h]
\small
\centering
\caption{$2 \times 2$ factorial design for synthetic
experiments, varying noise types (columns) and model complexity (rows).}
\label{tab:2x2_design}
\begin{tabular}{lcc}
\toprule
 & Spiral noise (unimodal) & Pinwheel noise (6-component) \\
\midrule
$X \in \mathbb{R}^2,\; k{=}1$
  & \textsc{Spiral$_L$} & \textsc{Pinwheel$_L$} \\
$X \in \mathbb{R}^7,\; k{=}5$
  & \textsc{Spiral$_H$} & \textsc{Pinwheel$_H$} \\
\bottomrule
\end{tabular}
\end{table}

\noindent \textbf{(2) Real-world data.}
\textsc{Taxi} contains New York City taxi trip records; the task is to predict drop-off coordinates from pick-up location and time features.
\textsc{Energy} \citep{TSANAS2012560} predicts heating and cooling loads for residential buildings from eight structural attributes.
\textsc{RF1-2D} and \textsc{RF1-4D} \citep{Spyromitros_Xioufis_2016} forecast river flows 48 hours ahead at 2 and 4 sites in the Mississippi River network, giving $Y \in \mathbb{R}^2$ and $Y \in \mathbb{R}^4$ respectively.
\textsc{SCM20D} \citep{Spyromitros_Xioufis_2016} predicts next-day mean prices in a supply chain management scenario.
All real-world datasets are normalized internally, and reported volumes are rescaled to the original $Y$-space.
Table~\ref{tab:datasets} summarizes the dataset characteristics.

\begin{table}[ht]
\centering
\caption{Dataset summary.}
\label{tab:datasets}
\small
\begin{tabular}{llrrr}
\toprule
Dataset & Type & $n$ & $\dim(X)$ & $\dim(Y)$ \\
\midrule
\textsc{Spiral$_L$}   & Synthetic & 30{,}000 & 2  & 2 \\
\textsc{Spiral$_H$}   & Synthetic & 30{,}000 & 7  & 2 \\
\textsc{Pinwheel$_L$} & Synthetic & 30{,}000 & 2  & 2 \\
\textsc{Pinwheel$_H$} & Synthetic & 30{,}000 & 7  & 2 \\
\midrule
\textsc{Taxi}     & Real & 6{,}000  & 2  & 2 \\
\textsc{Energy}   & Real &   768  & 8  & 2 \\
\textsc{RF1-2D}   & Real & 9{,}005  & 20 & 2 \\
\textsc{RF1-4D}   & Real & 9{,}005  & 20 & 4 \\
\textsc{SCM20D}   & Real & 5{,}000  & 61 & 2 \\
\bottomrule
\end{tabular}
\end{table}
\paragraph{Methods.}
We compare the proposed TRACE (TRACE-Diff and TRACE-FM) against latent-density-based methods CONTRA and JAPAN, a sample-based baseline (PCP-Diff), and shape-restricted baselines RCP \citep{johnstone2021conformal}, 
NLE \citep{messoudi2022ellipsoidal}, and MCQR \citep{fang2025contra}.
All methods have been described in Sections~\ref{sec:related} and~\ref{sec:trace}.

\paragraph{Evaluation.}
For each method, we report the empirical coverage on the test set and the average volume of the prediction regions.
The volume of multi-dimensional regions are estimated via quasi-Monte Carlo integration using Sobol sequences over local bounding boxes.
All experiments use miscoverage level $\alpha = 0.1$ and are repeated $20$ times with different random seeds; results are reported as mean $\pm$ standard deviation.

\paragraph{Implementation.}
DDPM and FM models typically use a hidden dimension of $256$,
a conditioning dimension of $128$, and $8$--$10$
FiLM-conditioned residual blocks, trained with exponential
moving average updates (decay $0.999$).
A conditional normalizing flow is trained separately to
support the baseline methods (CONTRA, JAPAN, RCP, NLE, MCQR);
most datasets use RealNVP coupling layers, with neural
spline flows (NSF) substituted where additional expressiveness
is needed.
PCP-Diff uses samples from the diffusion model.
For \textsc{Pinwheel}, the number of coupling layers and
the learning rate of the normalizing flow are increased
to accommodate its more complex conditional structure.
TRACE scores are computed with a budget of
$|\mathcal{T}| = 15$ time steps and $R = 8$ repeats
($B = 120$) for most datasets.
A shared CRN bank is pre-generated and reused across all
score evaluations as described in Section~\ref{sec:crn}.
The dataset is randomly split into training ($67.5\%$),
calibration ($22.5\%$), and test ($10\%$) subsets.
\subsection{Results}
\label{sec:results}
We now present the main experimental findings.
Tables~\ref{tab:results_synthetic},~\ref{tab:taxi_results}
and~\ref{tab:results_real} report empirical coverage and
prediction region volume across all methods and datasets. All methods attain empirical coverage close to the nominal $90\%$ level, consistent with the conformal guarantee.
We therefore focus the discussion on prediction region volume, which reflects the informativeness of the uncertainty sets, and use region visualizations (Figures \ref{fig:regions_synthetic}) to explain the observed differences.

Across all nine datasets, TRACE-FM achieves
the smallest volume in seven cases (bold entries in
Tables~\ref{tab:results_synthetic},~\ref{tab:taxi_results}
and~\ref{tab:results_real}) and is statistically
competitive on the remaining two. TRACE-Diff consistently tracks TRACE-FM with slightly larger volumes. The latent-density methods JAPAN perform best
on the two lower-dimensional synthetic settings but degrade
substantially as model complexity increases.
The following subsections analyze these patterns in
detail.


\subsubsection{Synthetic datasets}
\label{sec:results_synthetic}

\begin{figure}[ht]
\centering
\begin{minipage}[b]{\textwidth}
    \centering
    \includegraphics[width=\textwidth]{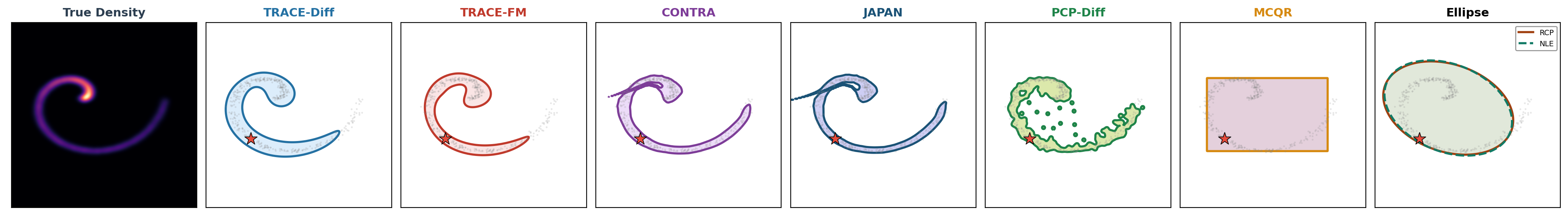}
    \centerline{\small (a) \textsc{Spiral$_L$}}
\end{minipage}
\begin{minipage}[b]{\textwidth}
    \centering
    \includegraphics[width=\textwidth]{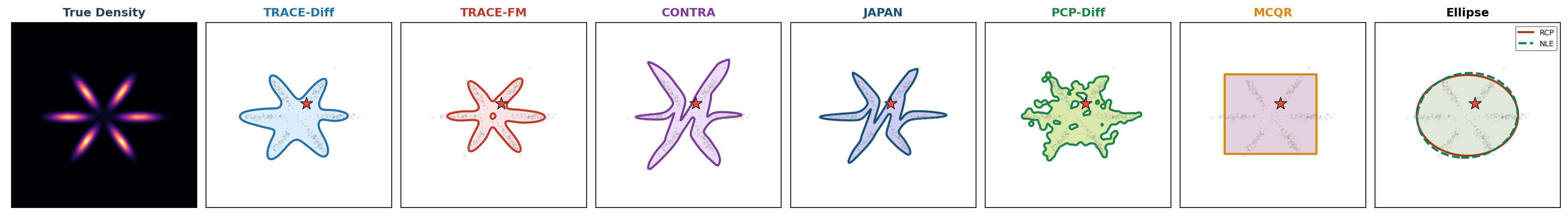}
    \centerline{\small (b) \textsc{Pinwheel$_L$}}
\end{minipage}
\begin{minipage}[c]{\textwidth}
    \centering
    \includegraphics[width=\textwidth]{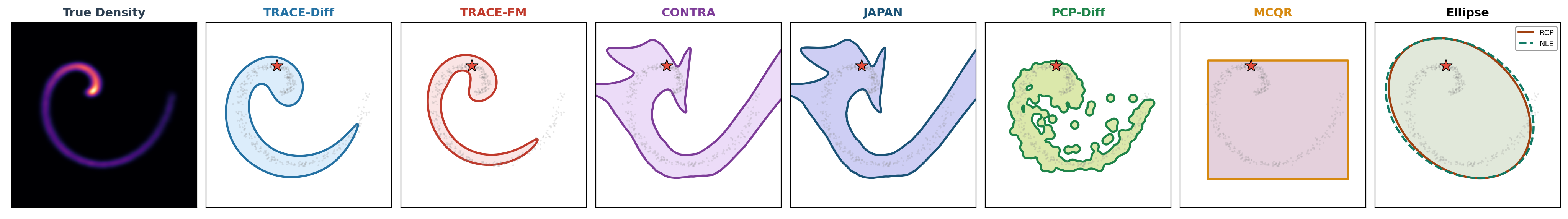}
    \centerline{\small (c) \textsc{Spiral$_H$}}
\end{minipage}
\begin{minipage}[d]{\textwidth}
    \centering
    \includegraphics[width=\textwidth]{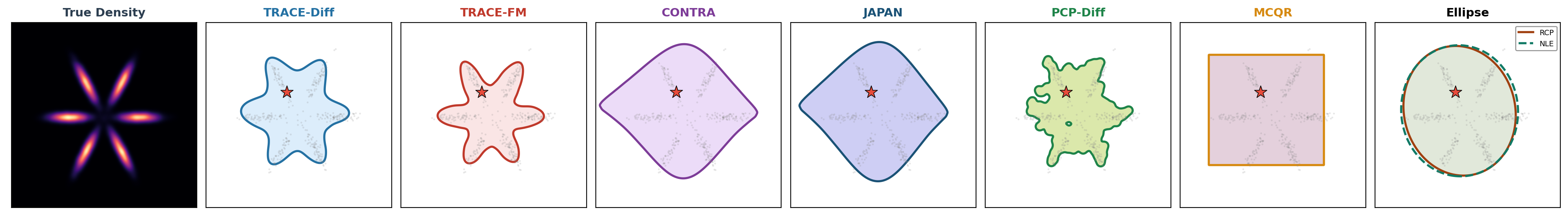}
    \centerline{\small (d) \textsc{Pinwheel$_H$}}
\end{minipage}

\caption{Prediction regions on synthetic datasets for a representative test input.
The leftmost panel shows the true conditional density.
Subsequent panels show the $90\%$ conformal prediction region for each method.
Gray dots are calibration samples; the red star marks the actual output $y$.}
\label{fig:regions_synthetic}
\end{figure}

\paragraph{Impact of Noise Type on Performance}

Comparing spiral and pinwheel columns within each
conditioning regime reveals that the latent-density
methods are more sensitive to noise complexity than
TRACE.
In the lower-dimensional regime, moving from
\textsc{Spiral$_L$} to \textsc{Pinwheel$_L$} increases
JAPAN's volume by a factor of $2.3$ and CONTRA's by
$2.6$, whereas TRACE-FM increases by only $1.4$.
In the higher-dimensional regime, the latent-density
methods have already saturated at large volumes, so
the additional effect of noise complexity is smaller
in relative terms; TRACE-FM increases by a factor of
$1.7$ from \textsc{Spiral$_H$} to \textsc{Pinwheel$_H$}.
The underlying reason is that accurate density estimation
of six well-separated modes with deep concavities is
inherently difficult for normalizing flows; CONTRA is
further disadvantaged by its connectivity constraint,
which forces the predicted region to be connected and
thus to fill the gaps between arms.
TRACE is less affected because it evaluates candidate
quality pointwise through reconstruction error rather
than through global density estimation.
Shape-restricted baselines are largely insensitive to
noise geometry, but their volumes remain consistently
the largest due to rigid geometric templates.
Figure~\ref{fig:regions_synthetic} illustrates these
differences: in rows (a) and (b), all generative methods
trace the target geometry in the lower-dimensional regime,
with JAPAN and CONTRA producing slightly tighter
boundaries on spiral but expanding visibly on pinwheel;
in rows (c) and (d), the contrast is more pronounced,
as CONTRA and JAPAN inflate into convex shapes while
TRACE continues to resolve the star-shaped structure.

\paragraph{Impact of Model Difficulty on Performance}
Increasing the input dimension from $\mathbb{R}^2$
 to $\mathbb{R}^7$ with five nuisance features is even more damaging to the latent-density methods. The normalizing flow must simultaneously learn a higher-dimensional conditioning mapping and an expressive invertible transform; the added burden erodes its precision on fine-grained noise geometry, and the prediction regions inflate dramatically. TRACE is far more robust to this shift, as its volumes remain comparable across the two input regimes. In Figure~\ref{fig:regions_synthetic}, comparing rows (a) and (c), the CONTRA and JAPAN regions expand well beyond the true support, while TRACE regions remain comparably tight; comparing rows (b) and (d), CONTRA and JAPAN lose the star-shaped geometry entirely and collapse into diamond shapes, while TRACE preserves its adaptive geometry.
\begin{table}[ht]
\centering
\begin{minipage}{0.99\linewidth}
\captionsetup{width=\linewidth}
\caption{Coverage (Cov, \%) and volume (Vol) on synthetic datasets (mean $\pm$ std over 20 repeats). Best volume in \textbf{bold}.}
\label{tab:results_synthetic}
\small
\setlength{\tabcolsep}{2.5pt}
\renewcommand{\arraystretch}{1.08}
\begin{tabular}{l cc cc cc cc}
\toprule
& \multicolumn{2}{c}{\textsc{Spiral$_L$}} & \multicolumn{2}{c}{\textsc{Pinwheel$_L$}} & \multicolumn{2}{c}{\textsc{Spiral$_H$}} & \multicolumn{2}{c}{\textsc{Pinwheel$_H$}} \\
\cmidrule(lr){2-3} \cmidrule(lr){4-5} \cmidrule(lr){6-7} \cmidrule(lr){8-9}
Method & Cov(\%) & Vol & Cov(\%) & Vol & Cov(\%) & Vol & Cov(\%) & Vol \\
\midrule
\multicolumn{9}{l}{\emph{Transport-based (proposed)}}\\
\quad TRACE-Diff     & 90.0 {\tiny$\pm$ 0.6} & 23.7 {\tiny$\pm$ 0.5} & 90.2 {\tiny$\pm$ 0.9} & 32.2 {\tiny$\pm$ 2.0} & 89.9 {\tiny$\pm$ 0.8} & 30.3 {\tiny$\pm$ 1.4} & 90.1 {\tiny$\pm$ 0.7} & 46.5 {\tiny$\pm$ 1.4} \\
\quad TRACE-FM       & 90.1 {\tiny$\pm$ 0.7} & 20.6 {\tiny$\pm$ 0.7} & 90.2 {\tiny$\pm$ 1.0} & 29.2 {\tiny$\pm$ 1.0} & 90.0 {\tiny$\pm$ 0.6} & \textbf{25.1} {\tiny$\pm$ 1.5} & 90.2 {\tiny$\pm$ 0.8} & \textbf{42.1} {\tiny$\pm$ 1.2} \\
\addlinespace[2pt]
\multicolumn{9}{l}{\emph{Latent-density-based}}\\
\quad CONTRA   & 90.1 {\tiny$\pm$ 0.8} & 14.2 {\tiny$\pm$ 0.7} & 90.0 {\tiny$\pm$ 1.2} & 37.5 {\tiny$\pm$ 4.7} & 90.0 {\tiny$\pm$ 0.6} & 64.9 {\tiny$\pm$ 6.9} & 90.2 {\tiny$\pm$ 0.6} & 85.2 {\tiny$\pm$ 3.1} \\
\quad JAPAN    & 90.2 {\tiny$\pm$ 0.5} & \textbf{12.5} {\tiny$\pm$ 0.5} & 90.4 {\tiny$\pm$ 1.0} & \textbf{28.9} {\tiny$\pm$ 2.6} & 90.0 {\tiny$\pm$ 0.6} & 64.8 {\tiny$\pm$ 6.8} & 90.2 {\tiny$\pm$ 0.6} & 82.3 {\tiny$\pm$ 2.7} \\
\addlinespace[2pt]
\multicolumn{9}{l}{\emph{Sample-based}}\\
\quad PCP-Diff & 90.1 {\tiny$\pm$ 0.5} & 18.9 {\tiny$\pm$ 0.3} & 90.1 {\tiny$\pm$ 0.9} & 37.4 {\tiny$\pm$ 1.5} & 89.8 {\tiny$\pm$ 0.6} & 34.9 {\tiny$\pm$ 1.2} & 90.1 {\tiny$\pm$ 0.6} & 49.2 {\tiny$\pm$ 1.0} \\
\addlinespace[2pt]
\multicolumn{9}{l}{\emph{Shape-restricted}}\\
\quad RCP      & 90.0 {\tiny$\pm$ 0.7} & 63.4 {\tiny$\pm$ 1.5} & 90.2 {\tiny$\pm$ 1.1} & 58.7 {\tiny$\pm$ 0.9} & 89.8 {\tiny$\pm$ 0.6} & 77.7 {\tiny$\pm$ 2.3} & 90.0 {\tiny$\pm$ 0.7} & 75.6 {\tiny$\pm$ 1.3} \\
\quad NLE      & 90.1 {\tiny$\pm$ 0.7} & 63.5 {\tiny$\pm$ 1.5} & 90.2 {\tiny$\pm$ 1.1} & 59.0 {\tiny$\pm$ 1.1} & 89.9 {\tiny$\pm$ 0.5} & 78.5 {\tiny$\pm$ 2.7} & 90.1 {\tiny$\pm$ 0.7} & 77.1 {\tiny$\pm$ 1.5} \\
\quad MCQR     & 90.1 {\tiny$\pm$ 0.7} & 61.5 {\tiny$\pm$ 0.3} & 89.7 {\tiny$\pm$ 0.8} & 63.3 {\tiny$\pm$ 0.9} & 89.7 {\tiny$\pm$ 0.5} & 77.7 {\tiny$\pm$ 1.6} & 90.0 {\tiny$\pm$ 0.6} & 84.8 {\tiny$\pm$ 1.4} \\
\bottomrule
\end{tabular}
\end{minipage}
\end{table}
\paragraph{Cross-over Impact.} On the easiest setting (Spiral$_L$), the latent-density methods produce the tightest regions. On the hardest setting (Pinwheel$_H$), the ranking fully reverses: TRACE achieves roughly half the volume of the latent-density methods. The two difficulty factors interact more severely for latent-density methods: from Spiral$_L$ to Pinwheel$_H$, JAPAN's volume increases by a factor of 6.6, driven primarily by a sharp sensitivity to model difficulty, whereas TRACE-FM increases by only a factor of 2.0.

\subsubsection{Real datasets}
\label{sec:results_real}
We now turn to the real-world datasets. Since the true conditional density is unavailable, we evaluate region quality through empirical coverage and volume. For \textsc{Taxi}, we additionally compare against a kernel density estimate as a visual reference.

\begin{table}[b]
\centering
\caption{\small \textsc{Taxi}: Coverage and volume for $90\%$ conformal regions.
Each entry is the mean over 20 random splits (standard error in parentheses).
For volume, the average volume and standard error is reported in units of $\times 10^{-2}$.
The smallest volume is in \textbf{bold}.}
\label{tab:taxi_results}
\small
\setlength{\tabcolsep}{2pt}
\renewcommand{\arraystretch}{1.05}
\begin{tabular}{lcccccccc}
\toprule
Metric & TRACE-Diff & TRACE-FM & CONTRA & JAPAN & PCP-Diff & RCP & NLE & MCQR \\
\midrule
Cov(\%) & 89.8(1.6) & 89.6(1.8) & 90.2(1.0) & 90.2(1.8) & 90.0(1.7) & 89.9(1.5) & 90.0(1.5) & 89.8(1.4) \\
Vol($\times 10^{-2}$) & \textbf{0.4}(0.0) & 0.5(0.0) & 1.0(0.1) & 0.5(0.1) & 0.5(0.0) & 0.6(0.0) & 0.6(0.1) & 0.6(0.0) \\
\bottomrule
\end{tabular}
\end{table}

\paragraph{NYC Taxi Data}

Figure~\ref{fig:regions_taxi} shows the 90\% prediction regions of the drop off location $y$ from various methods at a given taxi pick up location $x$. The red region in Panel (a) represents the highest-density area from a naive KDE estimate of the conditional distribution, which provides no coverage guarantee. In particular, the actual coverage can fall below the nominal level if the KDE estimate at $x$ is poor, for example due to limited sample size or strong heterogeneity in its neighborhood. This is why we need conformal prediction regions in (b)--(h) that provide coverage guarantees. Indeed, Table~\ref{tab:taxi_results} shows that all conformal regions achieved  coverage close to the nominal level and had volumes (areas in longitude--latitude coordinates) in the range of $0.004$--$0.010$. TRACE-Diff produces a compact primary region closely aligned with the  
high-density area of the KDE, including capturing a small disconnected component.
TRACE-FM attains a comparable volume and yields a single connected region. Among the latent-density approaches, CONTRA generates a 
smooth region covering a wider area, while JAPAN exhibits a more irregular and dispersed shape. The sample-based PCP-Diff construction appears fragmented into many scattered components. Finally, the shape-restricted methods, MCQR, RCP, and NLE, produce smooth but unnecessarily large regions that include low-probability drop-off locations. 
Overall, although the quantitative volumes are similar, TRACE-Diff provides the most meaningful and compact representation of the dominant density pattern, with TRACE-FM offering a similarly efficient but fully connected alternative.

\begin{figure}[H]
\centering
\begin{minipage}[b]{0.24\textwidth}
    \centering
    \includegraphics[width=\textwidth]{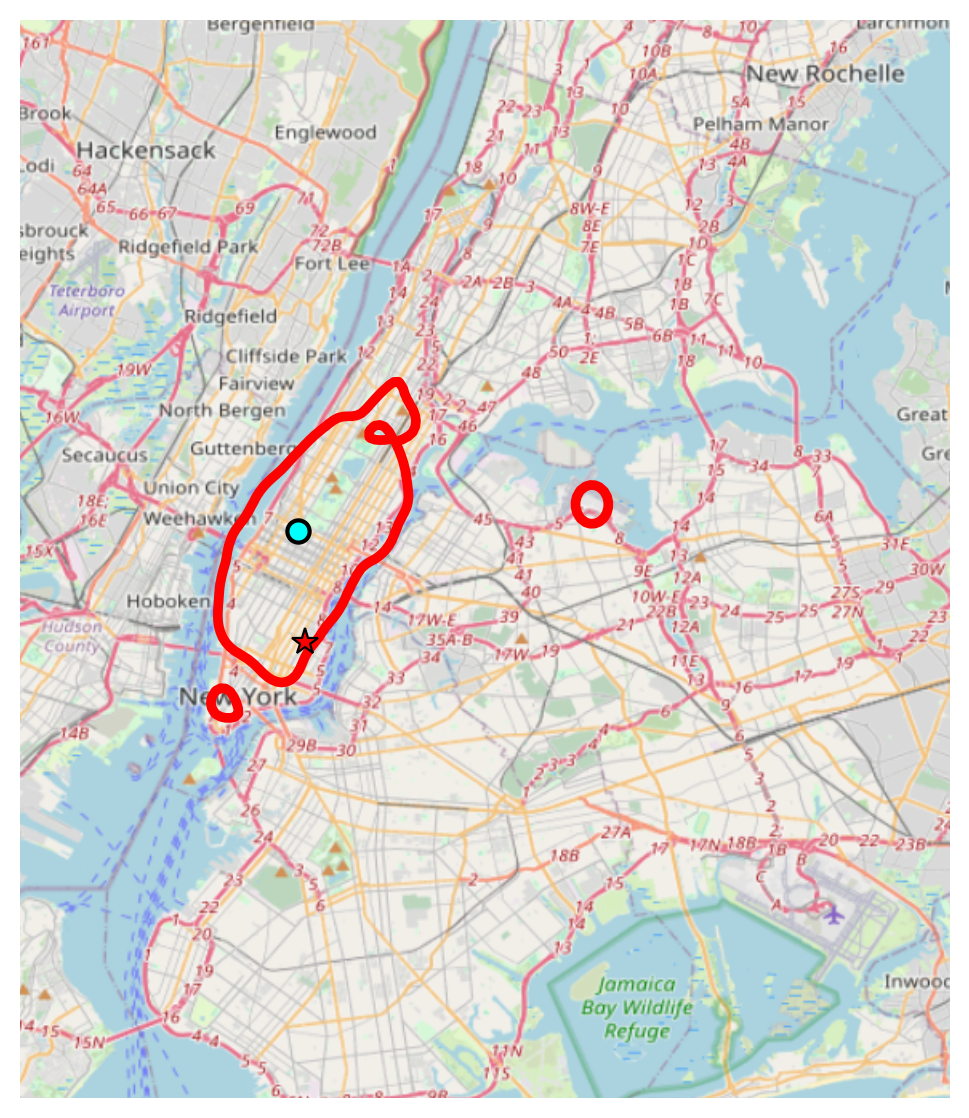}
    \centerline{\small (a) KDE}
\end{minipage}
\hfill
\begin{minipage}[b]{0.24\textwidth}
    \centering
    \includegraphics[width=\textwidth]{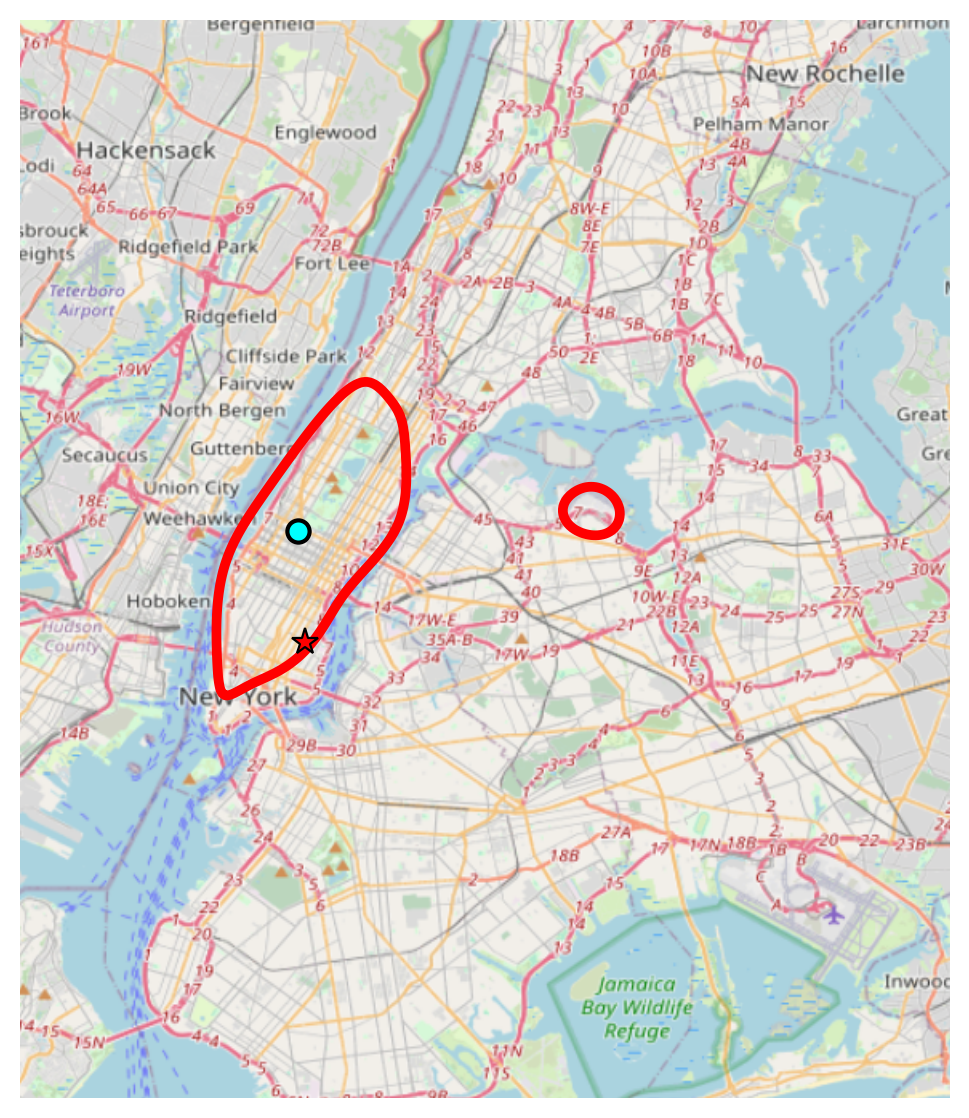}
    \centerline{\small (b) TRACE-Diff}
\end{minipage}
\hfill
\begin{minipage}[b]{0.24\textwidth}
    \centering
    \includegraphics[width=\textwidth]{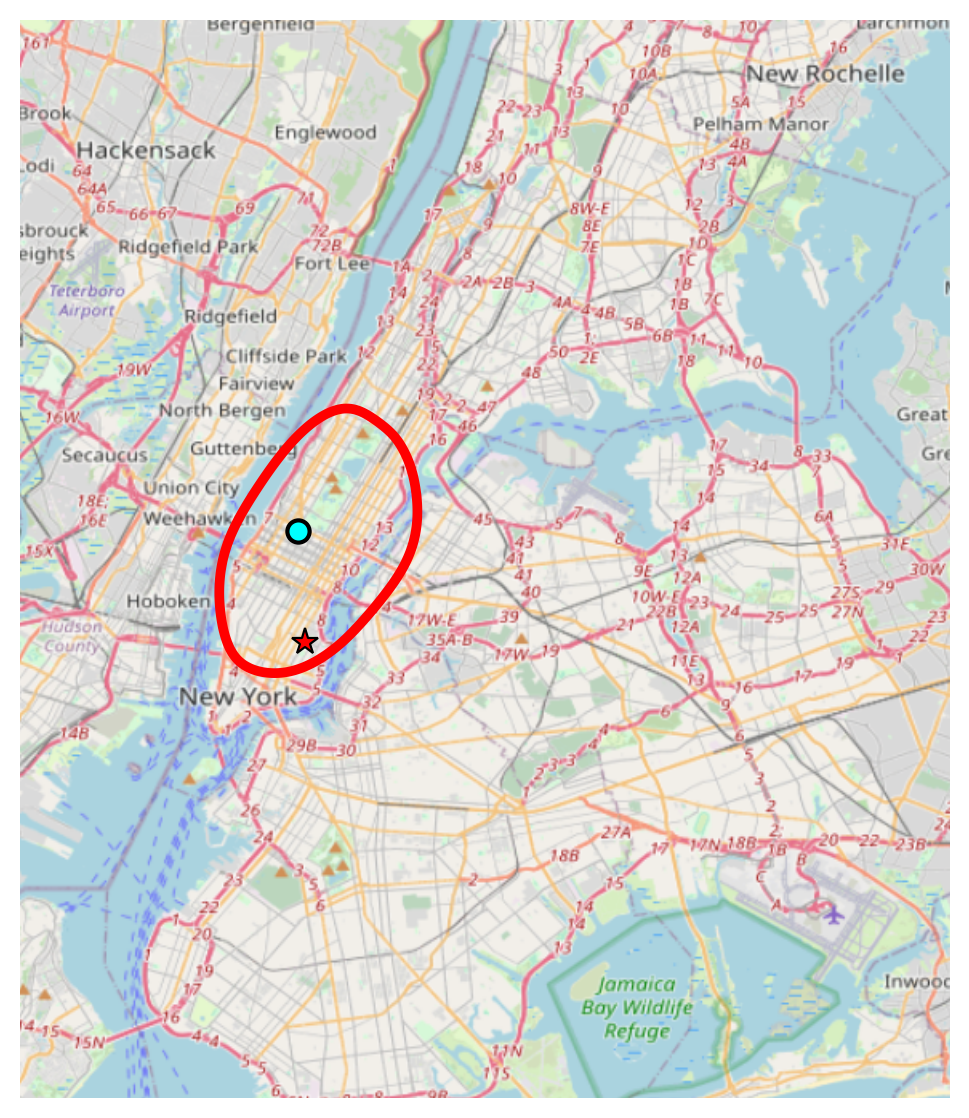}
    \centerline{\small (c) TRACE-FM}
\end{minipage}
\hfill
\begin{minipage}[b]{0.24\textwidth}
    \centering
    \includegraphics[width=\textwidth]{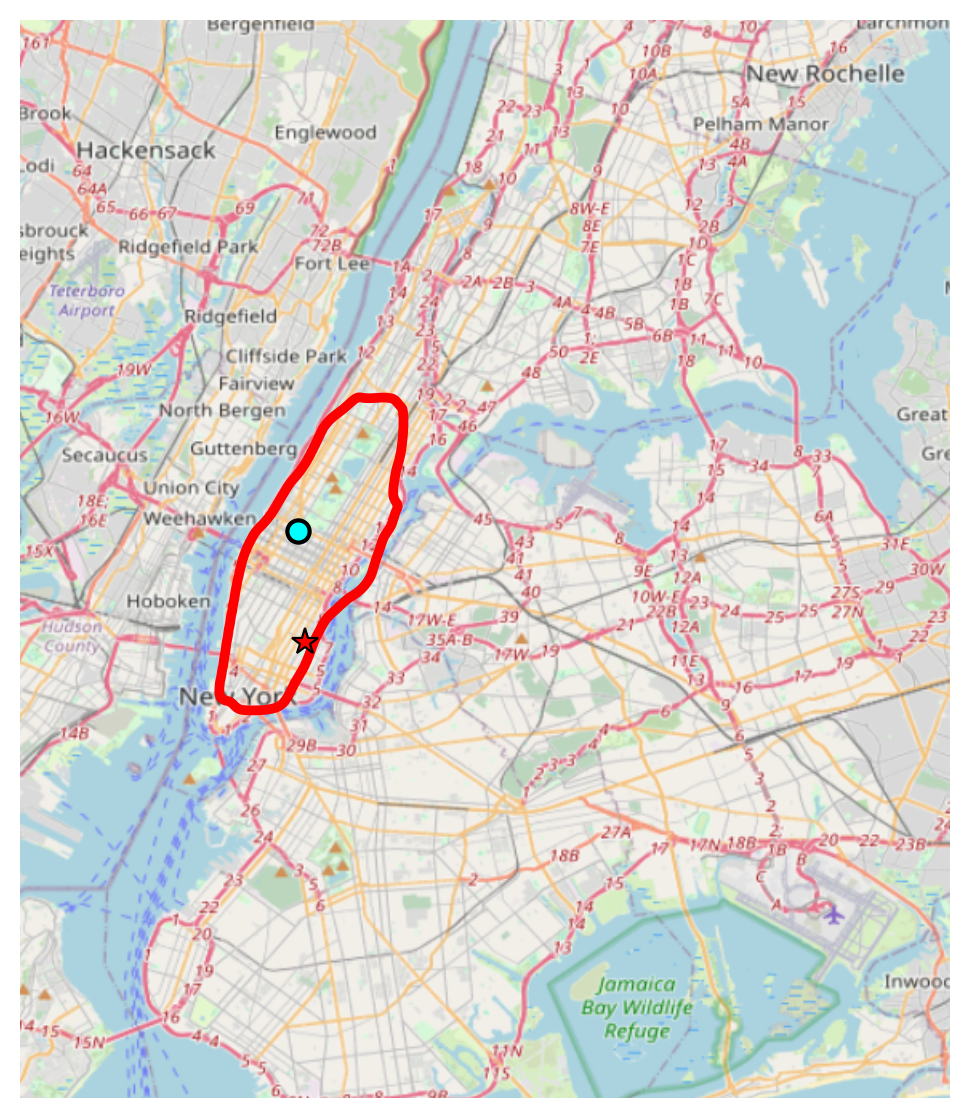}
    \centerline{\small (d) CONTRA}
\end{minipage}

\vspace{0.2cm}

\begin{minipage}[b]{0.24\textwidth}
    \centering
    \includegraphics[width=\textwidth]{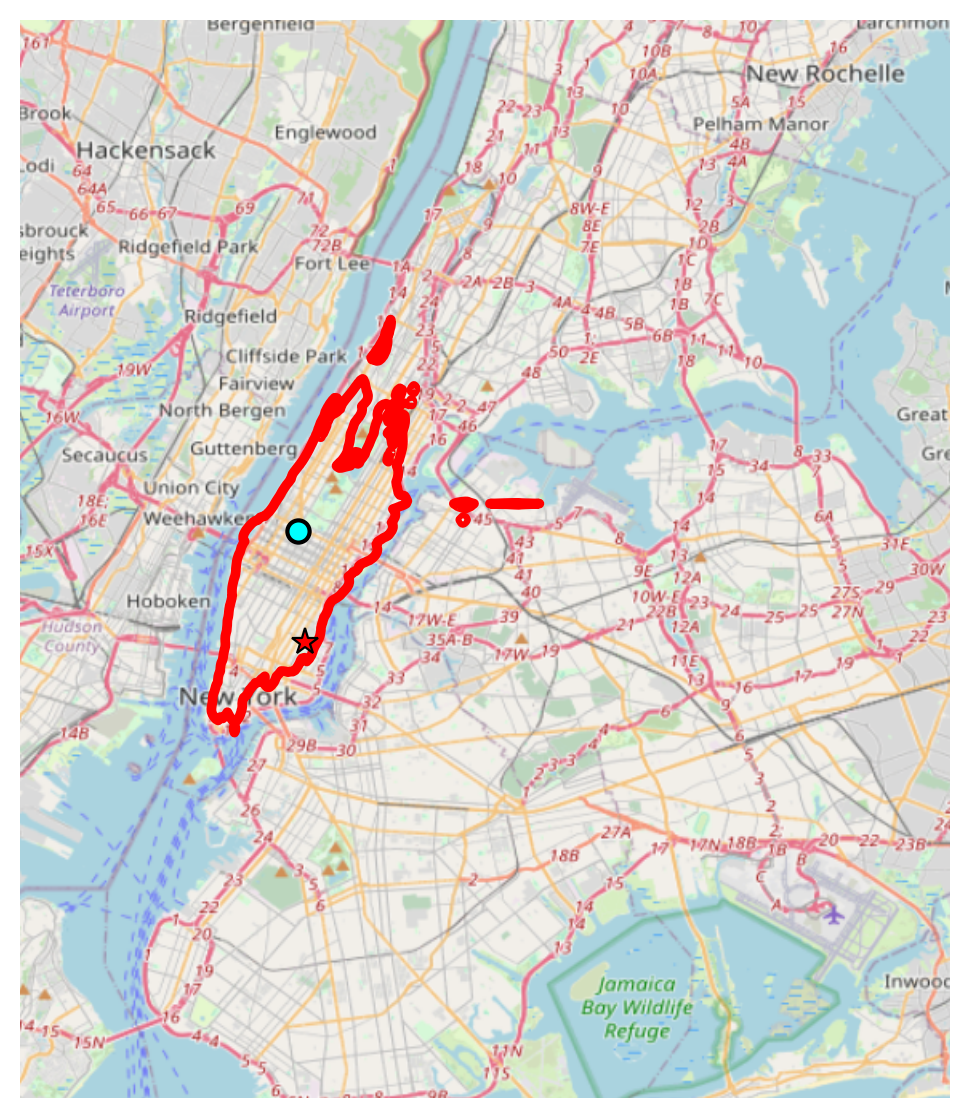}
    \centerline{\small (e) JAPAN}
\end{minipage}
\hfill
\begin{minipage}[b]{0.24\textwidth}
    \centering
    \includegraphics[width=\textwidth]{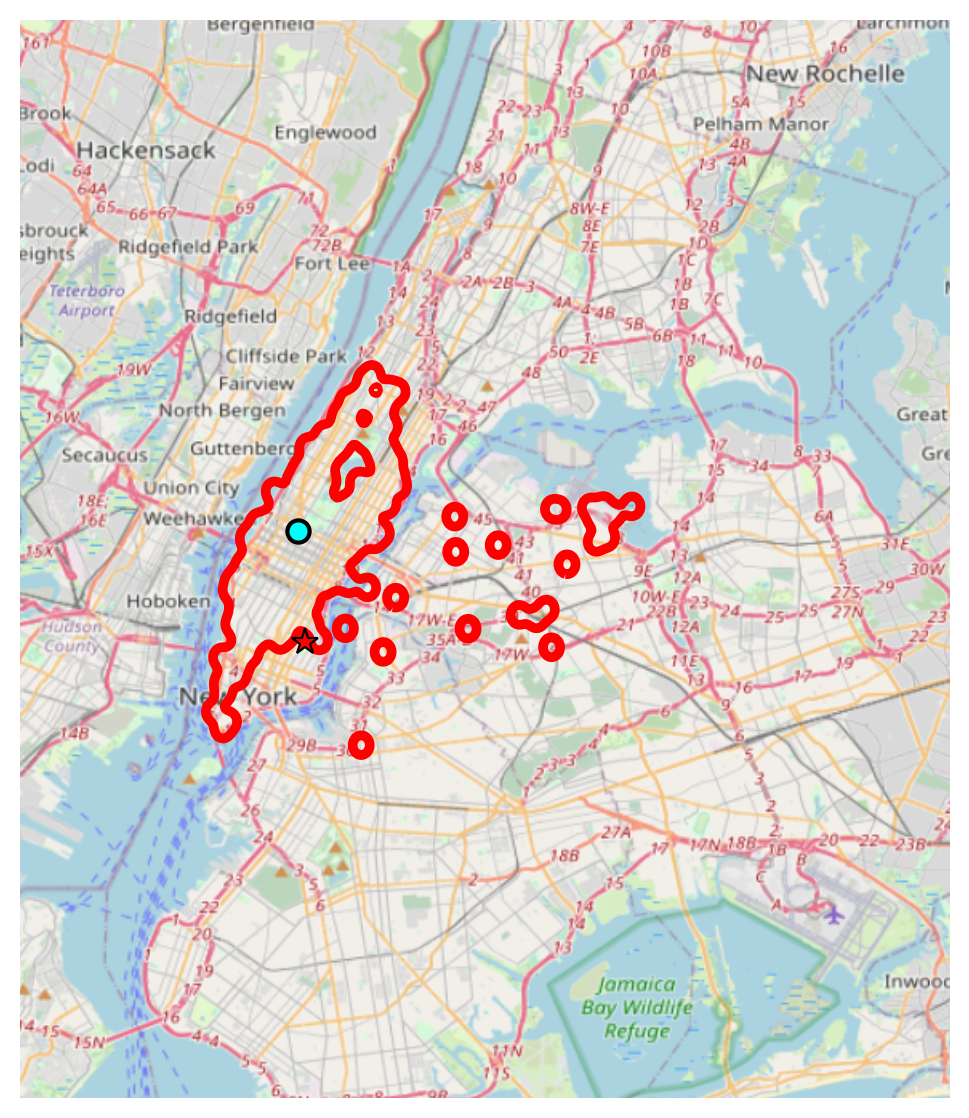}
    \centerline{\small (f) PCP-Diff}
\end{minipage}
\hfill
\begin{minipage}[b]{0.24\textwidth}
    \centering
    \includegraphics[width=\textwidth]{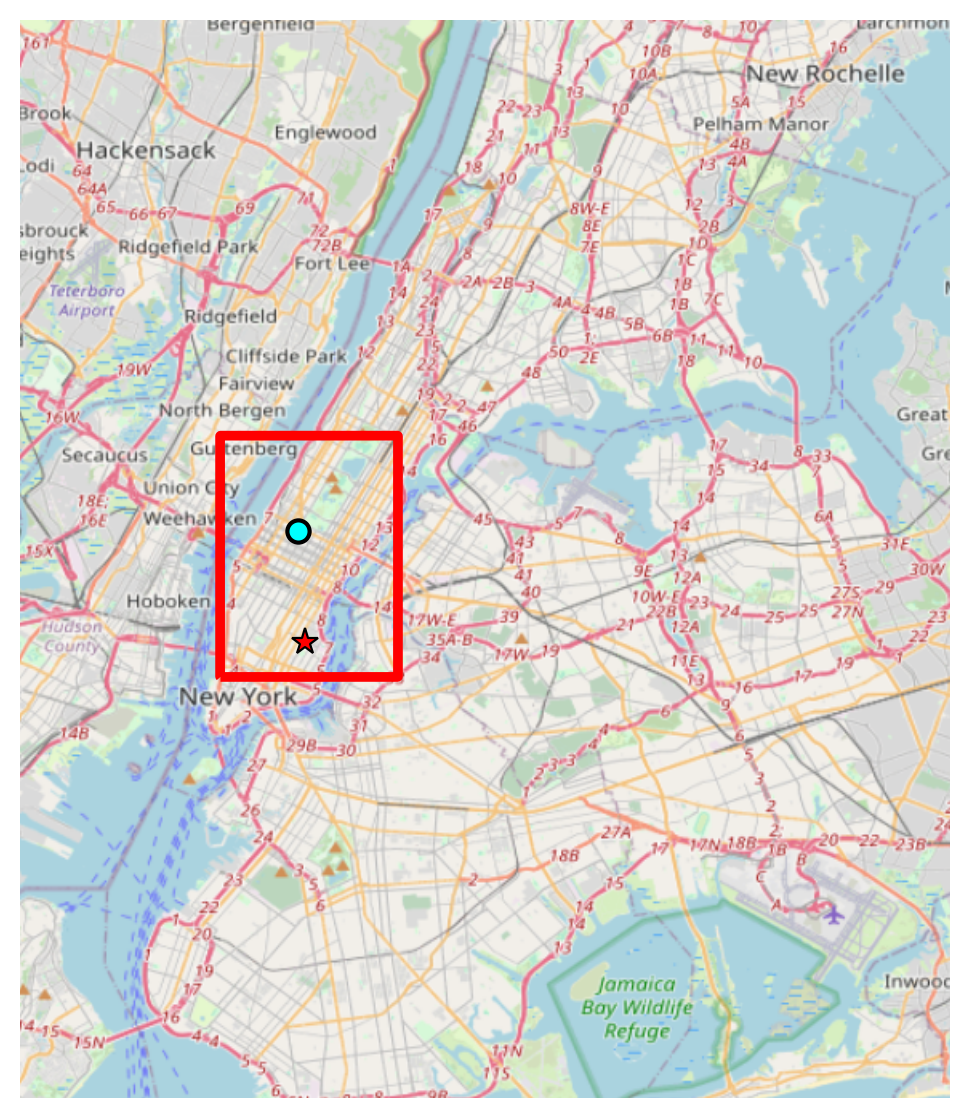}
    \centerline{\small (g) MCQR}
\end{minipage}
\hfill
\begin{minipage}[b]{0.24\textwidth}
    \centering
    \includegraphics[width=\textwidth]{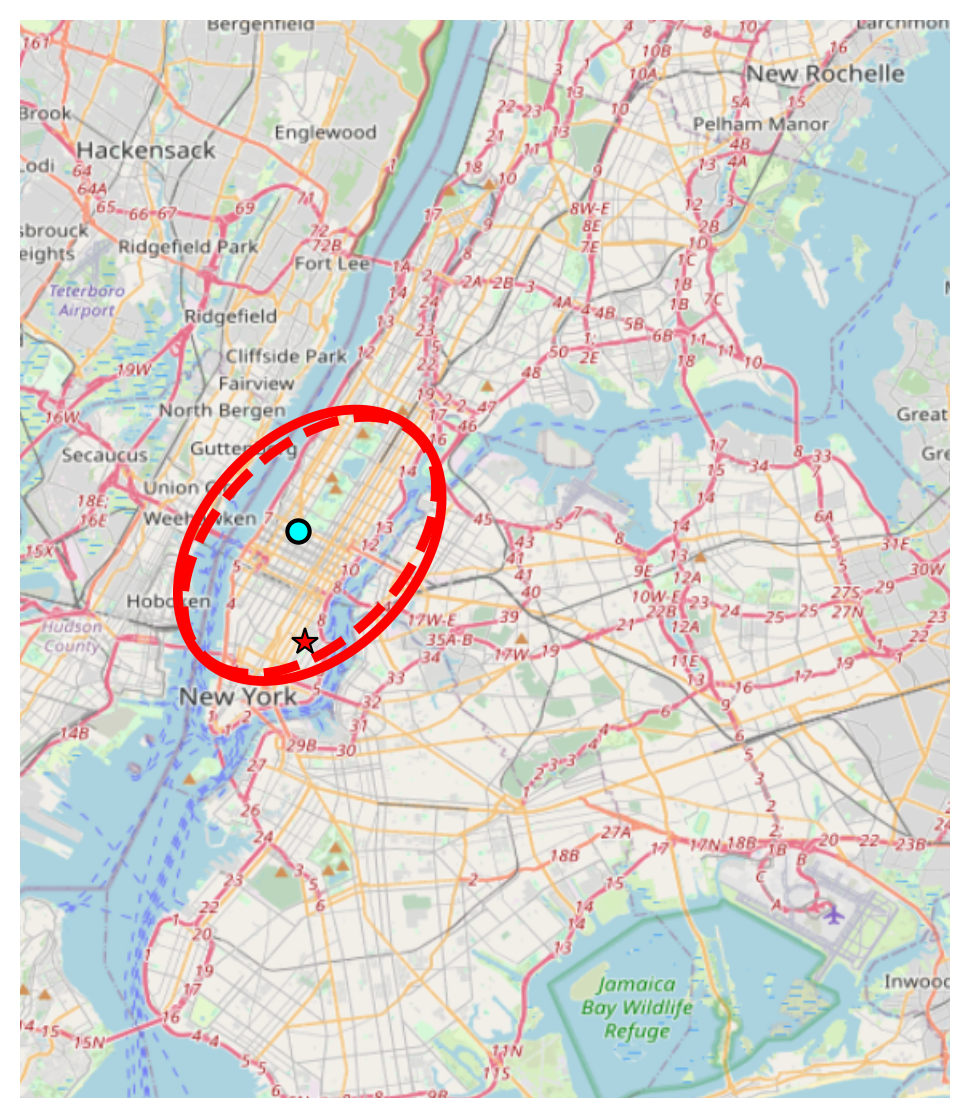}
    \centerline{\small (h) RCP / NLE}
\end{minipage}

\caption{Prediction regions on the \textsc{Taxi} dataset overlaid on a map of New York City.
The cyan dot marks the pick-up location; the red star indicates the actual drop-off for this input.
(a)~KDE reference density; (b)--(h)~conformal prediction regions for each method.}
\label{fig:regions_taxi}
\end{figure}

\begin{table}[ht]
\centering
\caption{Best volume in \textbf{bold}.
Methods with extremely large volume ($>10^6$) or standard deviation ($>10^3$)  are marked with $\dagger$.}
\label{tab:results_real}
\small
\setlength{\tabcolsep}{2.5pt}
\renewcommand{\arraystretch}{1.08}
\begin{tabular}{l cc cc cc cc}
\toprule
& \multicolumn{2}{c}{\textsc{Energy}} 
& \multicolumn{2}{c}{\textsc{RF1-2D}} 
& \multicolumn{2}{c}{\textsc{RF1-4D}} 
& \multicolumn{2}{c}{\textsc{SCM20D}} \\
\cmidrule(lr){2-3} \cmidrule(lr){4-5} \cmidrule(lr){6-7} \cmidrule(lr){8-9}
Method & Cov (\%) & Vol & Cov (\%) & Vol & Cov (\%) & Vol & Cov (\%) & Vol($\times 10^{3}$) \\
\midrule
\multicolumn{9}{l}{\emph{Transport-based (proposed)}}\\
TRACE-Diff
& 88.4 {\tiny$\pm$ 3.8} & 3.2 {\tiny$\pm$ 0.7}
& 90.8 {\tiny$\pm$ 1.1} & 1.1 {\tiny$\pm$ 0.1}
& 90.4 {\tiny$\pm$ 1.2} & 2.7 {\tiny$\pm$ 0.4}
& 89.8 {\tiny$\pm$ 2.0} & 1.8 {\tiny$\pm$ 1.4} \\

TRACE-FM
& 90.3 {\tiny$\pm$ 3.6} & \textbf{3.1} {\tiny$\pm$ 0.5}
& 90.6 {\tiny$\pm$ 1.2} & \textbf{0.9} {\tiny$\pm$ 0.1}
& 90.9 {\tiny$\pm$ 1.1} & \textbf{2.0} {\tiny$\pm$ 0.3}
& 90.1 {\tiny$\pm$ 1.5} & \textbf{0.8} {\tiny$\pm$ 0.4} \\

\addlinespace[2pt]
\multicolumn{9}{l}{\emph{Latent-density-based}}\\
CONTRA
& 90.3 {\tiny$\pm$ 3.6} & 18.1 {\tiny$\pm$ 6.6}
& 90.7 {\tiny$\pm$ 1.4} & 3.4 {\tiny$\pm$ 0.9}
& 90.2 {\tiny$\pm$ 1.4} & 596.7$^\dagger$
& 90.2 {\tiny$\pm$ 1.2} & 155.0 {\tiny$\pm$ 273.3} \\

JAPAN
& 90.8 {\tiny$\pm$ 3.8} & 15.4 {\tiny$\pm$ 6.0}
& 90.8 {\tiny$\pm$ 1.3} & 2.4 {\tiny$\pm$ 0.4}
& 89.9 {\tiny$\pm$ 1.4} & 10.6 {\tiny$\pm$ 3.0}
& 90.2 {\tiny$\pm$ 1.2} & 153.5 {\tiny$\pm$ 271.5} \\

\addlinespace[2pt]
\multicolumn{9}{l}{\emph{Sample-based}}\\
PCP-Diff
& 89.5 {\tiny$\pm$ 4.1} & 4.0 {\tiny$\pm$ 0.8}
& 90.7 {\tiny$\pm$ 1.2} & 1.2 {\tiny$\pm$ 0.1}
& 90.4 {\tiny$\pm$ 1.0} & 3.2 {\tiny$\pm$ 0.3}
& 90.0 {\tiny$\pm$ 1.7} & 73.7 {\tiny$\pm$ 5.8} \\

\addlinespace[2pt]
\multicolumn{9}{l}{\emph{Shape-restricted}}\\
RCP
& 90.7 {\tiny$\pm$ 5.1} & 40.7 {\tiny$\pm$ 16.9}
& 90.1 {\tiny$\pm$ 1.2} & 578.4$^\dagger$ 
& 90.0 {\tiny$\pm$ 1.0} & $\dagger$
& 90.3 {\tiny$\pm$ 1.9} & $\dagger$ \\

NLE
& 91.1 {\tiny$\pm$ 4.3} & 19.2 {\tiny$\pm$ 5.9}
& 90.3 {\tiny$\pm$ 1.0} & 154.2{\tiny$\pm$ 602.3} 
& 90.0 {\tiny$\pm$ 1.2} & $\dagger$
& 90.0 {\tiny$\pm$ 1.7} & $\dagger$ \\

MCQR
& 87.3 {\tiny$\pm$ 4.8} & 24.0 {\tiny$\pm$ 7.9}
& 90.2 {\tiny$\pm$ 1.2} & 24.3 {\tiny$\pm$ 92.1}
& 89.4 {\tiny$\pm$ 1.1} & $\dagger$
& 89.3 {\tiny$\pm$ 1.5} & 76.3 {\tiny$\pm$ 19.7} \\
\bottomrule
\end{tabular}
\end{table}

\paragraph{Energy, RF1, and SCM20D.}
On \textsc{Energy}, TRACE yields the smallest regions,
with TRACE-FM and TRACE-Diff achieving volumes of $3.1$
and $3.2$, respectively. These are approximately five to
six times smaller than JAPAN and CONTRA.
The advantage persists on \textsc{RF1-2D} and becomes more
pronounced on \textsc{RF1-4D}, where TRACE-FM attains a
volume of $2.0$, compared to $10.6$ for JAPAN and
substantially larger and highly variable regions for CONTRA.
The contrast is most striking on \textsc{SCM20D}.
Although the target remains two-dimensional, the input dimension
is substantially higher.
Here TRACE-FM produces a region of size $0.8 \times 10^{3}$,
whereas both CONTRA and JAPAN exceed $150 \times 10^{3}$,
with extreme cross-split variability.
Even PCP-Diff and MCQR remain roughly two orders of
magnitude larger than TRACE-FM.
Across all datasets, coverage remains close to the nominal
$90\%$ level, while TRACE consistently yields the most
compact regions.
The $\dagger$ entries in Table~\ref{tab:results_real}
reflect two distinct failure modes: shape-restricted methods
enclose substantial empty space due to rigid geometric
templates, while latent-density methods lose precision on
fine-grained conditional structure as input or output
dimensionality grows.


\subsection{Stability Analysis}\label{sec:stability}

\begin{figure}[ht]
\centering
\begin{minipage}[b]{0.48\textwidth}
    \centering
    \includegraphics[width=\textwidth, trim=0 0 0 20, clip]{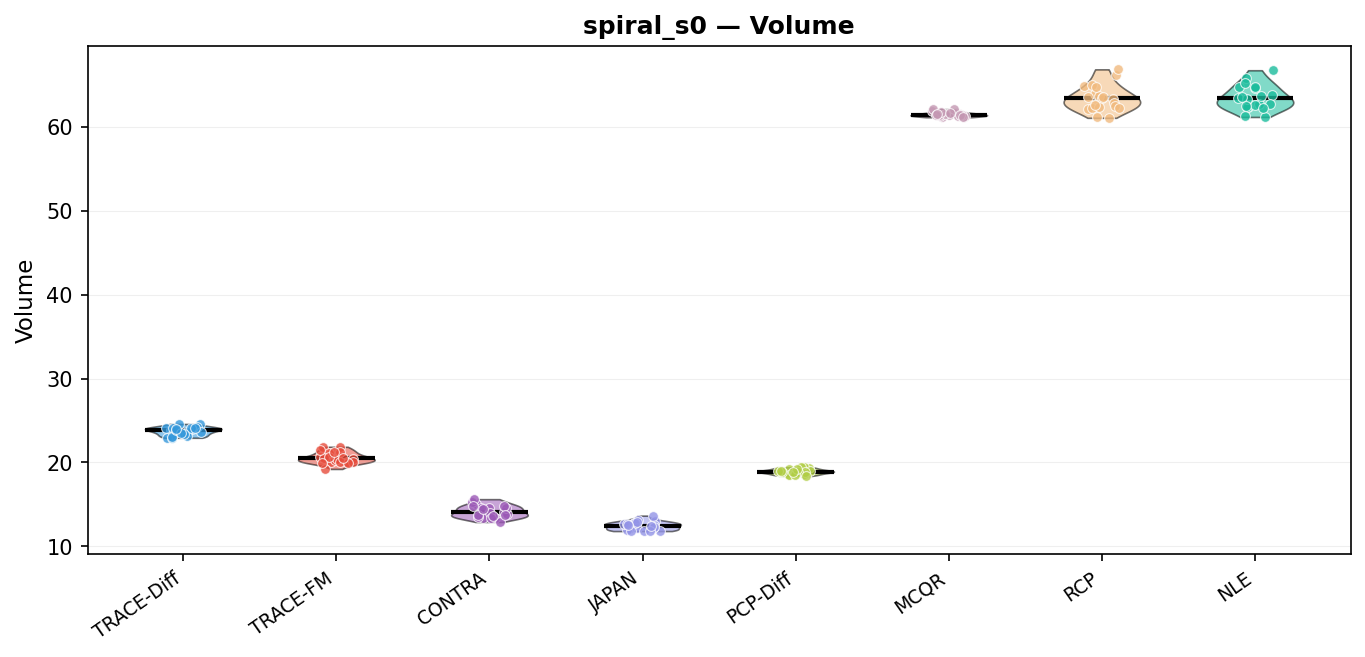}
    \centerline{\small (a) \textsc{Spiral$_L$}}
\end{minipage}
\hfill
\begin{minipage}[b]{0.48\textwidth}
    \centering
    \includegraphics[width=\textwidth, trim=0 0 0 20, clip]{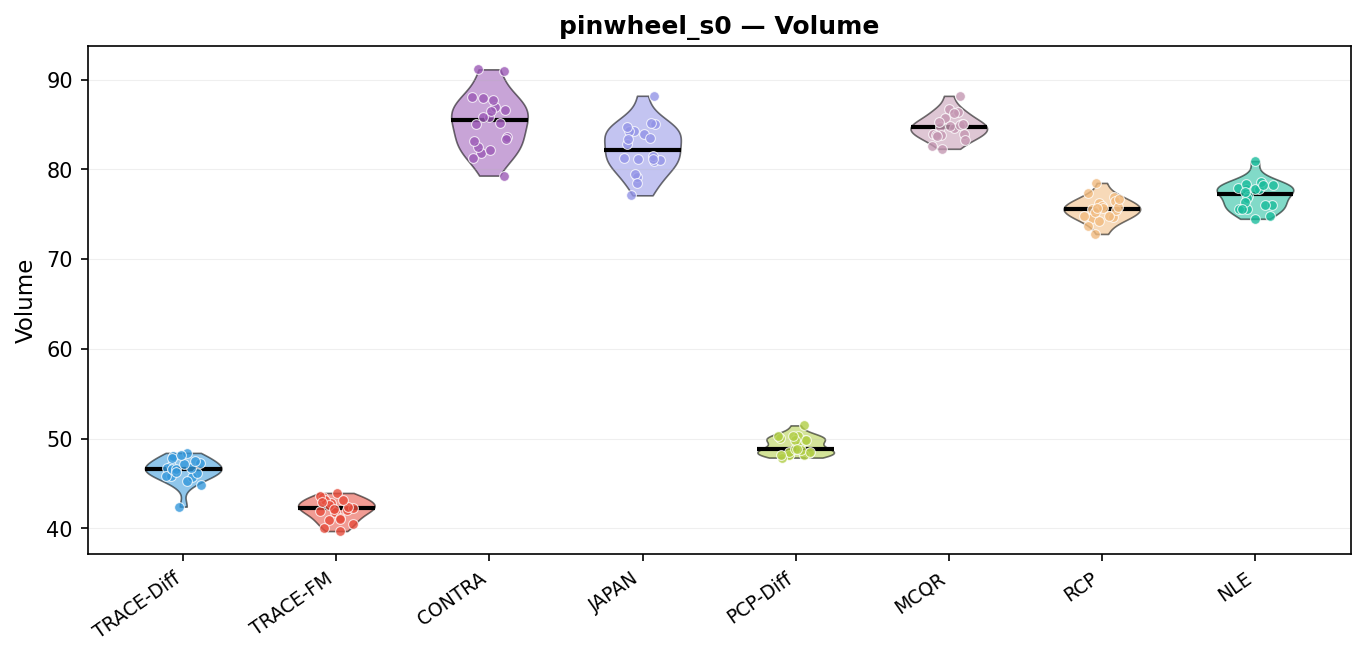}
    \centerline{\small (b) \textsc{Pinwheel$_H$}}
\end{minipage}

\vspace{0.3cm}

\begin{minipage}[b]{0.48\textwidth}
    \centering
    \includegraphics[width=\textwidth, trim=0 0 0 20, clip]{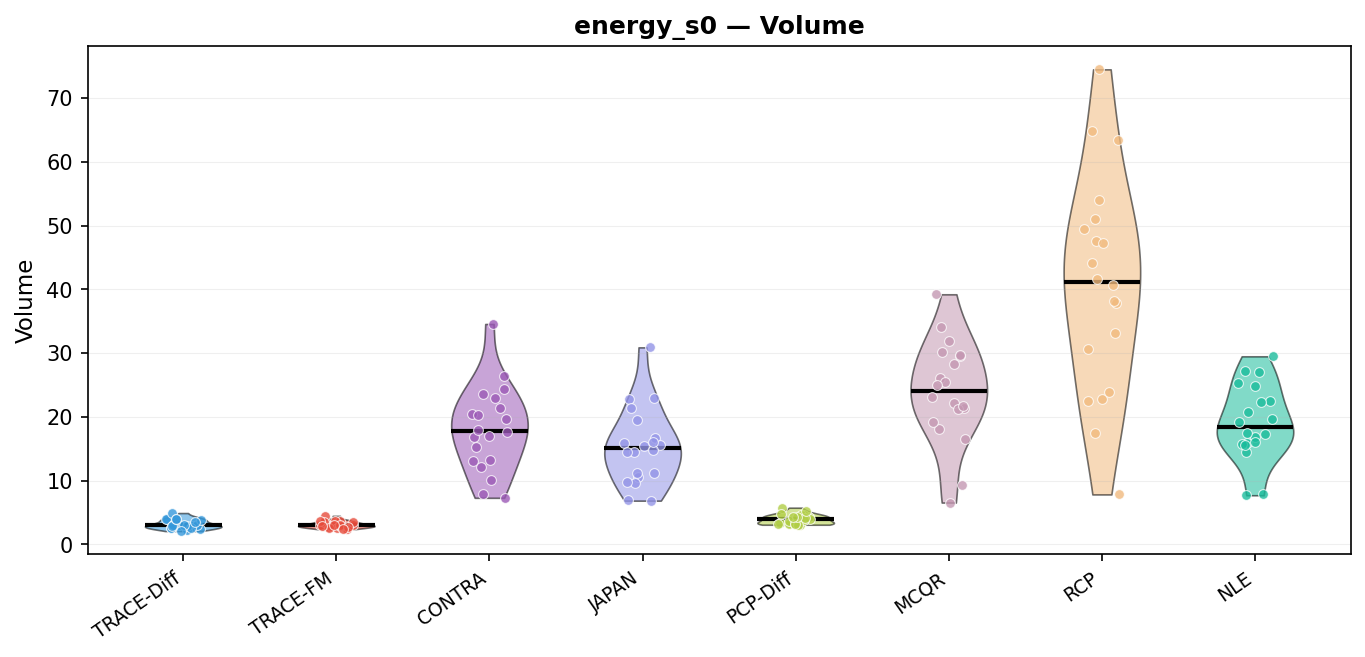}
    \centerline{\small (c) \textsc{Energy}}
\end{minipage}
\hfill
\begin{minipage}[b]{0.48\textwidth}
    \centering
    \includegraphics[width=\textwidth, trim=0 0 0 20, clip]{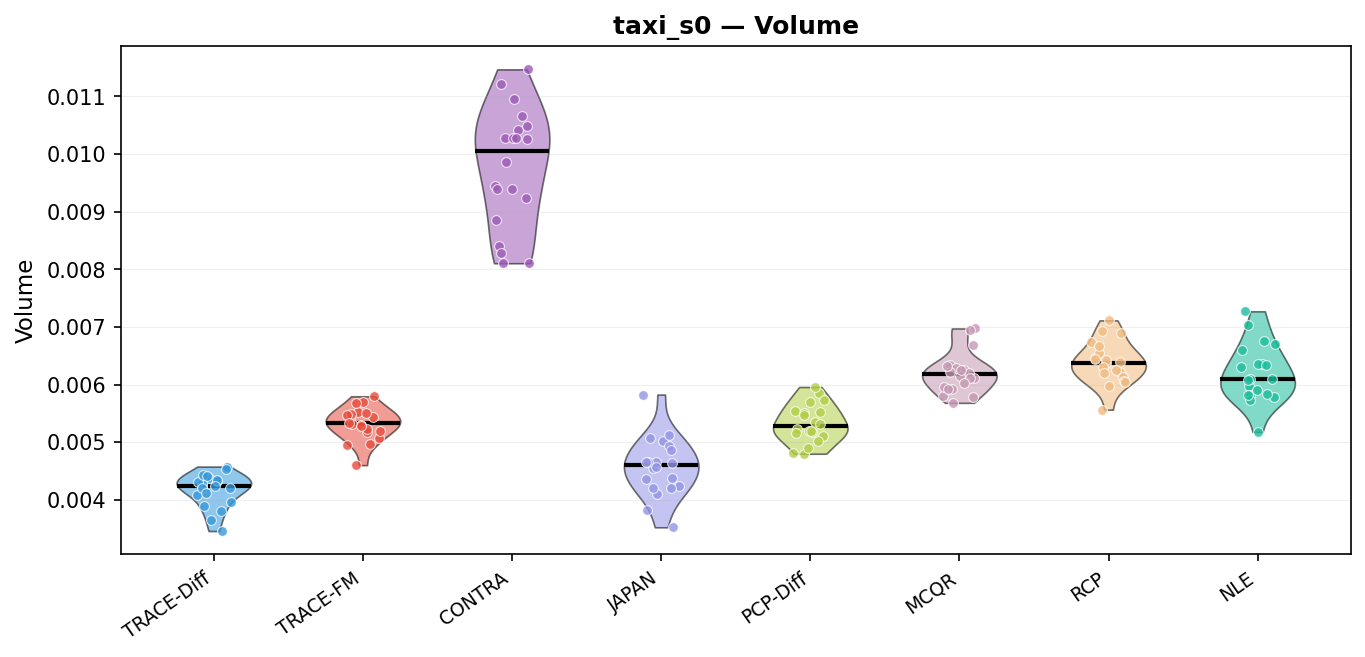}
    \centerline{\small (d) \textsc{Taxi}}
\end{minipage}
\caption{Prediction region volume across 20 repeated experiments on two synthetic and two real-world datasets.
Each dot represents one repeat; the horizontal bar indicates the mean.}
\label{fig:volume_violin}
\end{figure}
Besides average performance, the variability of region
volume across random splits is also crucial for practical
reliability. Figure~\ref{fig:volume_violin} displays the distribution of region volumes
over 20 random repeats for four representative datasets.
While the ranking of the mean performance of different
methods is the same as that reported in the tables,
the violin plots highlight important differences in cross-split stability.

Across all datasets shown, TRACE exhibits tightly
concentrated distributions with relatively low variance,
indicating robustness to data splits and random initialization.
In contrast, latent-density-based methods display noticeably wider spreads
on more challenging datasets, particularly \textsc{Energy},
suggesting greater sensitivity to training randomness.
Shape-restricted baselines show mixed stability behavior:
in some cases their distributions remain concentrated,
while in others they exhibit substantial variability,
and their volumes are consistently larger overall.

In short, the stability analysis indicates that
TRACE not only achieves compact regions
on average, but also maintains reliable performance
across repeated random splits.

\subsection{Monte Carlo Budget}
\label{sec:ablation_mc}
\begin{figure}[ht]
\centering
\includegraphics[width=\textwidth]{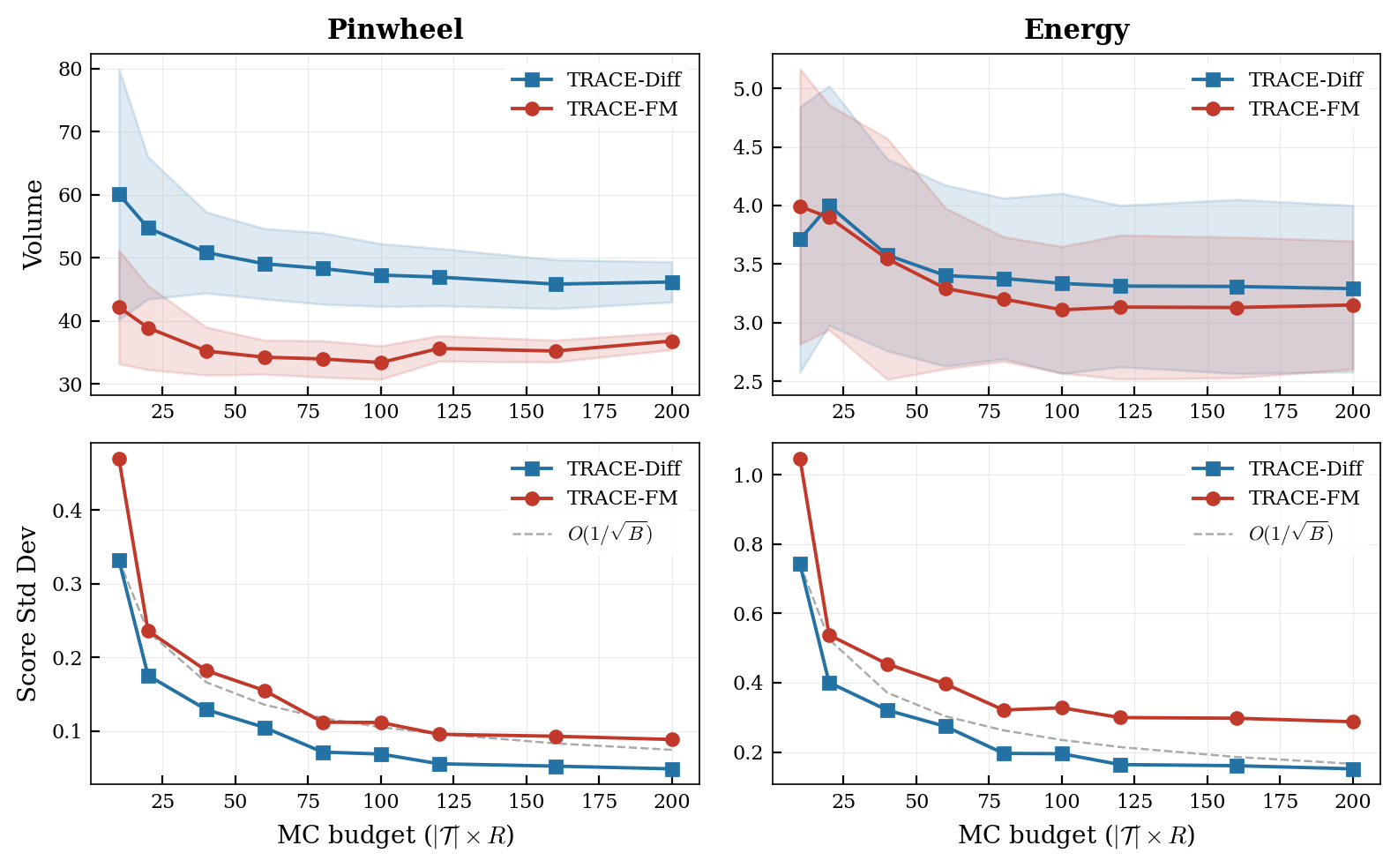}
\caption{
Effect of Monte Carlo budget $B = |\mathcal{T}| \times R$ on conformal region volume (top row) 
and the standard deviation of  TRACE evaluated on calibration points (bottom row)
for TRACE-Diff and TRACE-FM on \textsc{Pinwheel} (left) and \textsc{Energy} (right).
Results are averaged over 20 repeats; solid lines denote the mean and shaded bands indicate $\pm 1$ standard deviation.The dashed gray line shows the $O(B^{-1/2})$ reference decay for score standard deviation.
}
\label{fig:mc_ablation}
\end{figure}

Section~\ref{sec:theory} established that Monte Carlo noise in TRACE scores inflates the calibrated quantile and, consequently, the prediction region volume.
We empirically examine this relationship by varying the total Monte Carlo budget
$B = |\mathcal{T}| \times R$ and measuring its effect on calibration score variability and conformal region volume.

Figure~\ref{fig:mc_ablation} reveals three consistent patterns. First, the standard deviation of TRACE scores across calibration points decreases monotonically with increasing budget on both datasets, closely following the $O(B^{-1/2})$ reference curve.
This empirical behavior aligns with the Monte Carlo variance analysis in Section~\ref{sec:theory}. Second, prediction region volumes decrease overall as $B$ increases, though not strictly monotonically.
While mild fluctuations appear at intermediate budgets due to finite-sample variability,
the curves approach a stable plateau for larger $B$, indicating diminishing returns from additional Monte Carlo samples. Third, beyond $B \approx 100\text{--}120$, both calibration score variability and conformal region volume change only marginally.
The default configuration used in the main experiments ($|\mathcal{T}| = 15$, $R = 8$, yielding $B = 120$) lies within this stable regime.
This suggests that the volumes reported in Tables~\ref{tab:results_synthetic} and~\ref{tab:results_real}
are not materially driven by residual MC estimation noise under our experimental setup.


\section{Conclusion}
\label{sec:conclusion}

We introduced TRACE, a transport-based nonconformity scoring framework derived directly from the training objectives of conditional diffusion and flow matching models.
By aggregating denoising or velocity-matching errors across multiple time steps and perturbations, and using a CRN bank shared across all evaluations, our scores define a deterministic conformal ordering \emph{conditional on the fixed bank} and plug seamlessly into split conformal prediction with finite-sample marginal coverage guarantees.
Across nine datasets, the proposed approach, especially TRACE-FM, yields prediction regions with competitive or smaller volume than latent-density and shape-restricted baselines, while naturally adapting to curved, multimodal, and disconnected conditional distributions.
Monte Carlo approximation primarily affects efficiency (e.g., calibrated thresholds and region volume) rather than validity; for any fixed CRN bank, the conformal coverage guarantee holds exactly.
The main computational overhead is evaluation-time averaging, which scales linearly with the Monte Carlo budget and can be efficiently parallelized and batched on modern hardware.

Several directions remain open.
Extending transport-based scoring to structured outputs (e.g., sequences, graphs, and images) would broaden applicability beyond Euclidean targets.
Since region quality depends on the underlying generative model, understanding robustness under model misspecification is an important next step.
Developing variants that target stronger notions such as conditional coverage would further enhance practical utility.
Another direction is to combine TRACE with
user-specified point estimators by applying transport
alignment to residuals, extending the framework beyond
fully generative models.



\newpage


\appendix

\section{ Proof of Theorem~\ref{thm:score_mse}}
\label{app:proof_score_mse}
\begin{proof}
For each fixed $t \in \mathcal{T}$, define
$$
\hat{\mu}_t
=
\frac{1}{R}
\sum_{r=1}^{R}
\ell(x,y;t,\xi_{t,r}),
\qquad
\mu_t
=
\mathbb{E}_{\xi}
\bigl[
\ell(x,y;t,\xi)
\bigr].
$$
Then
$$
s_\omega(x,y)
=
\frac{1}{|\mathcal{T}|}
\sum_{t \in \mathcal{T}}
\hat{\mu}_t,
\qquad
\bar{s}_{\mathcal{T}}(x,y)
=
\frac{1}{|\mathcal{T}|}
\sum_{t \in \mathcal{T}}
\mu_t.
$$
By linearity of expectation, 
$
\mathbb{E}_\omega[\hat{\mu}_t]
=
\mu_t$. Therefore,
$$
\mathbb{E}_\omega[s_\omega(x,y)]
=
\frac{1}{|\mathcal{T}|}
\sum_{t \in \mathcal{T}}
\mu_t
=
\bar{s}_{\mathcal{T}}(x,y).
$$
Further, independence of the auxiliary draws across $r$ and $t$ implies that 
$$
\mathrm{Var}_\omega(\hat{\mu}_t)
=
\frac{1}{R}
\mathrm{Var}_{\xi}
\bigl(
\ell(x,y;t,\xi)
\bigr),
$$
and
$$
\mathrm{Var}_\omega(s_\omega)
=
\mathrm{Var}_\omega\!\left(
\frac{1}{|\mathcal{T}|}
\sum_{t \in \mathcal{T}}
\hat{\mu}_t
\right)
=
\frac{1}{|\mathcal{T}|^2}
\sum_{t \in \mathcal{T}}
\mathrm{Var}_\omega(\hat{\mu}_t)
=
\frac{1}{|\mathcal{T}|^2}
\sum_{t \in \mathcal{T}}
\frac{
\mathrm{Var}_{\xi}
\bigl(
\ell(x,y;t,\xi)
\bigr)
}{R}.
$$
Finally, if $
\sup_{t\in\mathcal{T}}
\mathrm{Var}_{\xi}
\bigl(
\ell(x,y;t,\xi)
\bigr)
\le
C
$, 
then
$$
\mathbb{E}_\omega
\bigl[
(
s_\omega(x,y)
-
\bar{s}_{\mathcal{T}}(x,y)
)^2
\bigr]=\mathrm{Var}_\omega(s_\omega)
\le
\frac{|\mathcal{T}|}{|\mathcal{T}|^2}
\frac{C}{R}
=
\frac{C}{|\mathcal{T}|R}
=
O\!\left(\frac{1}{B}\right).
$$
And by the Cauchy--Schwarz inequality,
$$
\mathbb{E}_\omega
\bigl[\bigl|
s_\omega(x,y)
-
\bar{s}_{\mathcal{T}}(x,y)
\bigr|\bigr]
=
O\!\left(\frac{1}{\sqrt{B}}\right).
$$
\end{proof}

\section{Proof of Theorem~\ref{thm:threshold_rate}}
\label{app:proof_threshold}
\begin{proof}
Let the calibration set be $\{(x_i,y_i)\}_{i=1}^{n_{\mathrm{cal}}}$.
Define
$$
Z_i := \bar{s}_{\mathcal{T}}(x_i,y_i),
\qquad
\tilde Z_i := s_\omega(x_i,y_i),
\qquad
\varepsilon_i := \tilde Z_i - Z_i .
$$
Let $k = \left\lceil (1-\alpha)(n_{\mathrm{cal}}+1) \right\rceil$.
Denote by $q^{\mathcal{T}}$ and $q^\omega$
the $k$-th order statistics of $\{Z_i\}_{i=1}^{n_{\mathrm{cal}}}$ and
$\{\tilde Z_i\}_{i=1}^{n_{\mathrm{cal}}}$, respectively.

\subsection*{Step 1: Deterministic order-statistic stability}

For any two real vectors $\{a_i\}_{i=1}^n$ and $\{b_i\}_{i=1}^n$,
let $a_{(k)}$ and $b_{(k)}$ denote their $k$-th order statistics.
Then
$$
|a_{(k)} - b_{(k)}|
\le
\max_{1\le i \le n} |a_i - b_i|.
$$
Applying this with $a_i = Z_i$ and $b_i = \tilde Z_i$ yields
$$
|q^\omega - q^{\mathcal{T}}|
\le
\max_{1\le i \le n_{\mathrm{cal}}} |\varepsilon_i|.
$$
Therefore,
$$
\mathbb{E}_\omega\bigl[|q^\omega - q^{\mathcal{T}}|\bigr]
\le
\mathbb{E}_\omega\Bigl[\max_{1\le i \le n_{\mathrm{cal}}} |\varepsilon_i|\Bigr].
$$

\subsection*{Step 2: Bounding $\mathbb{E}_\omega[\max_i |\varepsilon_i|]$}

By Theorem~\ref{thm:score_mse}, for each $i$,
$$
\mathbb{E}_\omega[\varepsilon_i^2]
=
\mathbb{E}_\omega\!\left[
\bigl(s_\omega(x_i,y_i)-\bar{s}_{\mathcal{T}}(x_i,y_i)\bigr)^2
\right]
\le
\frac{C}{B},
$$
where
$$
C := \max_{1 \le i \le n_{\mathrm{cal}}} \sup_{t \in \mathcal{T}}
\mathrm{Var}_\xi\bigl(\ell(x_i,y_i;t,\xi)\bigr),
\qquad
B = |\mathcal{T}|\cdot R.
$$
Let $W := \max_{1\le i\le n_{\mathrm{cal}}} |\varepsilon_i| \ge 0$.
Using the tail-integral representation,
$$
\mathbb{E}_\omega[W]
=
\int_0^\infty \mathbb{P}_\omega(W>t)\,dt.
$$
By the union bound,
$$
\mathbb{P}_\omega(W>t)
=
\mathbb{P}_\omega\!\left(\max_i |\varepsilon_i|>t\right)
\le
\sum_{i=1}^{n_{\mathrm{cal}}}
\mathbb{P}_\omega(|\varepsilon_i|>t).
$$
By Chebyshev's inequality,
$$
\mathbb{P}_\omega(|\varepsilon_i|>t)
\le
\frac{\mathbb{E}_\omega[\varepsilon_i^2]}{t^2}
\le
\frac{C}{B\,t^2}.
$$
Combining the last two displays gives
$$
\mathbb{P}_\omega(W>t)
\le
\frac{n_{\mathrm{cal}}\, C}{B\, t^2}.
$$
Let $t_0 := \sqrt{n_{\mathrm{cal}}\, C / B}$.
Splitting the tail integral at $t_0$ and using $\mathbb{P}(\cdot)\le 1$ on $[0,t_0]$,
\begin{align*}
\mathbb{E}_\omega[W]
&\le
\int_0^{t_0} 1\,dt
+
\int_{t_0}^{\infty}
\frac{n_{\mathrm{cal}}\, C}{B\, t^2}\,dt
\\
&=
t_0
+
\frac{n_{\mathrm{cal}}\, C}{B}\cdot \frac{1}{t_0}
=
2\sqrt{\frac{n_{\mathrm{cal}}\, C}{B}}.
\end{align*}
Putting Steps~1 and 2 together yields the claimed bound:
$$
\mathbb{E}_\omega\bigl[|q^\omega - q^{\mathcal{T}}|\bigr]
\le
\mathbb{E}_\omega\Bigl[\max_i |\varepsilon_i|\Bigr]
\le
2\sqrt{\frac{n_{\mathrm{cal}}\, C}{B}}\,.
$$

\end{proof}

\section{Proof of Proposition~\ref{prop:discretization}}
\label{app:proof_discretization}

\begin{proof}
Fix $(x,y)$ and define
$$
\mu(t) := \mu(t;x,y) := \mathbb{E}_{\xi}\bigl[\ell(x,y;t,\xi)\bigr].
$$
By Assumption~\ref{assump:lipschitz_time}, $\mu$ is $L$-Lipschitz on $[0,1]$, i.e.,
$$
|\mu(t_1)-\mu(t_2)| \le L|t_1-t_2|
\qquad \forall\, t_1,t_2\in[0,1].
$$
Recall that
$$
\bar{s}(x,y) = \int_0^1 \mu(t)\,dt,
\qquad
\bar{s}_{\mathcal{T}}(x,y) = \frac{1}{m}\sum_{j=1}^m \mu(t_j).
$$
Partition $[0,1]$ into $m$ subintervals of length $\Delta$:
$$
I_j := \bigl[(j-1)\Delta,\, j\Delta\bigr], \qquad j=1,\dots,m.
$$
Then
$$
\bar{s}(x,y) - \bar{s}_{\mathcal{T}}(x,y)
=
\sum_{j=1}^m
\left(
\int_{I_j} \mu(t)\,dt
-
\Delta\,\mu(t_j)
\right)
=
\sum_{j=1}^m
\int_{I_j}\bigl(\mu(t)-\mu(t_j)\bigr)\,dt.
$$
Taking absolute values and applying the triangle inequality yields
$$
\bigl|\bar{s}(x,y) - \bar{s}_{\mathcal{T}}(x,y)\bigr|
\le
\sum_{j=1}^m
\int_{I_j}
\bigl|\mu(t)-\mu(t_j)\bigr|\,dt.
$$
By Lipschitz continuity,
$$
\bigl|\mu(t)-\mu(t_j)\bigr|
\le
L\,|t-t_j|
\qquad \forall\, t\in I_j.
$$
Since $t_j\in I_j$ and $I_j$ has length $\Delta$, we have
$$
|t-t_j| \le \Delta \qquad \forall\, t\in I_j,
$$
and hence
$$
\int_{I_j}\bigl|\mu(t)-\mu(t_j)\bigr|\,dt
\le
\int_{I_j} L\,|t-t_j|\,dt
\le
\int_{I_j} L\,\Delta\,dt
=
L\,\Delta^2.
$$
Summing over $j=1,\dots,m$ gives
$$
\bigl|\bar{s}(x,y) - \bar{s}_{\mathcal{T}}(x,y)\bigr|
\le
\sum_{j=1}^m L\,\Delta^2
=
mL\,\Delta^2
=
L\,\Delta
=
\frac{L}{m}.
$$

To obtain the stated constant $1/2$, observe that for any interval $I_j$,
the average distance from a fixed point $t_j\in I_j$ to points $t\in I_j$
is at most $\Delta/2$, i.e.,
$$
\int_{I_j} |t-t_j|\,dt \le \frac{\Delta^2}{2}.
$$
Indeed, the integral is maximized when $t_j$ is at an endpoint of $I_j$,
in which case $\int_{0}^{\Delta} u\,du = \Delta^2/2$.
Therefore,
$$
\int_{I_j}\bigl|\mu(t)-\mu(t_j)\bigr|\,dt
\le
L \int_{I_j}|t-t_j|\,dt
\le
L \cdot \frac{\Delta^2}{2}.
$$
Substituting this sharper bound into the earlier sum yields
$$
\bigl|\bar{s}(x,y) - \bar{s}_{\mathcal{T}}(x,y)\bigr|
\le
\sum_{j=1}^m \frac{L\Delta^2}{2}
=
\frac{mL\Delta^2}{2}
=
\frac{L}{2m}
=
\frac{L}{2|\mathcal{T}|}.
$$
This completes the proof.
\end{proof}

\section{Synthetic Dataset Construction}
\label{app:synthetic_data}

All four synthetic datasets generate outputs as
$Y = k \cdot f(x_1, x_2) + \varepsilon$, where $f$ is a
shared nonlinear function, $k > 0$ is a signal scale,
and $\varepsilon$ is a structured noise term whose
distribution defines the geometry of
$p(Y \mid X = x)$.
The $2 \times 2$ factorial design crosses two noise
types (spiral, pinwheel) with two conditioning regimes
(lower-dimensional, higher-dimensional), as summarized in
Table~\ref{tab:2x2_design}.

\paragraph{Shared conditional mean function.}
All four datasets use the map
$f: \mathbb{R}^2 \to \mathbb{R}^2$,
$$
f(x_1, x_2) =
\begin{pmatrix}
2x_1^3 - 3x_2^2 + 5x_2 + x_1 x_2 \\
x_1^2 x_2 - 4x_2^2 + 3x_1^2 x_2 + 7
\end{pmatrix}.
$$
Only the first two coordinates of $X$ enter $f$;
any remaining dimensions are nuisance covariates.

\paragraph{Conditioning regimes.}
The lower-dimensional variants (\textsc{Spiral$_L$},
\textsc{Pinwheel$_L$}) use
$X \sim \mathcal{N}((-2, -1.5)^\top, I_2)$ with signal
scale $k = 1$, so the conditional mean is $g(X) = f(X)$
and the signal and noise contribute comparably to $Y$.
The higher-dimensional variants (\textsc{Spiral$_H$},
\textsc{Pinwheel$_H$}) use
$X \sim \mathcal{N}(0, I_7)$ with signal scale $k = 5$,
so the conditional mean is $g(X) = 5 \cdot f(x_1, x_2)$
and the signal dominates.
The five additional input dimensions in the
higher-dimensional regime carry no information about $Y$
but must be processed by the conditioning network,
testing robustness to irrelevant input features.

\paragraph{Spiral noise.}
The noise $\varepsilon$ is generated by first drawing an
angle $\theta \sim \mathrm{Uniform}(0, 2\pi)$ and then
sampling
$$
\varepsilon =
\begin{pmatrix}
\theta \cos\theta + \eta_1 \\
\theta \sin\theta + \eta_2
\end{pmatrix},
\qquad
\eta_1 \sim \mathcal{N}(0, 0.2^2),\;
\eta_2 \sim \mathcal{N}(0, 0.1^2).
$$
The noise traces a spiral arc around the origin, so that
for a given $x$ the conditional $p(Y \mid X = x)$ is a
single curved band centered at $k \cdot f(x_1, x_2)$.
The distribution is unimodal, connected, and non-convex.

\paragraph{Pinwheel noise.}
The noise $\varepsilon$ is drawn from a mixture of
$K = 6$ Gaussian components arranged in a regular hexagon.
Component $k$ ($k = 0, \dots, 5$) has mean and covariance
$$
\mu_k = R
\begin{pmatrix}
\cos\theta_k \\ \sin\theta_k
\end{pmatrix},
\qquad
\Sigma_k = Q_k
\begin{pmatrix}
1 & 0 \\
0 & e^2
\end{pmatrix}
Q_k^\top,
$$
where $\theta_k = 2\pi k / 6$,
$Q_k$ is the rotation matrix by angle $\theta_k$,
$R = 3$ is the hexagon radius, and $e = 0.16$ controls
the eccentricity of each elongated ellipse.
Components are selected with equal probability $1/6$.
For a given $x$, the conditional $p(Y \mid X = x)$
consists of six well-separated elongated clusters with
six-fold rotational symmetry.
The distribution is multimodal and non-convex, with deep
concavities between arms that convex or ellipsoidal
regions cannot capture without enclosing substantial
empty space.

\section{Computational Cost}
\label{app:computational_cost}
All experiments are conducted on a single NVIDIA L4 GPU.
On synthetic datasets ($n{=}30{,}000$), training the
normalizing flow (500 epochs) takes 5--8 minutes
depending on the flow architecture (RealNVP vs.\ NSF),
while training the diffusion model (2{,}000 epochs) and
the flow matching model (2{,}000 epochs) each requires
10--15 minutes due to the additional EMA parameter
updates.
At inference, the latent-density scores require a single
deterministic forward pass per candidate and evaluate
the full calibration and test sets (${\sim}10{,}000$
points) in under $0.05$\,s.
The transport-based scores involve
$|\mathcal{T}| \times R = 120$ forward passes per
candidate ($|\mathcal{T}|{=}15$, $R{=}8$), taking
approximately $1.2$\,s for the same evaluation, roughly
$30\times$ slower but still modest in absolute terms.
The total wall-clock time for a single
train-calibrate-evaluate run, excluding volume
estimation, is under 20 minutes for all methods.
\bibliography{sample}

\end{document}